\theoremstyle{plain}  % You can change the style if needed
\newtheorem{theorem}{Theorem}[section]  % Defines the theorem environment
\newtheorem{lemma}[theorem]{Lemma}
\newtheorem{proposition}[theorem]{Proposition}
\newtheorem{corollary}[theorem]{Corollary}
\newtheoremstyle{remarkstyle}
  {} % Space above
  {} % Space below
  {} % Body font
  {} % Indent amount
  {\bfseries} % Theorem head font
  {.} % Punctuation after theorem head
  {.5em} % Space after theorem head
  {} % Theorem head spec (can be left empty, meaning `normal')
\theoremstyle{remarkstyle} \newtheorem*{remark}{Remark}
\definecolor{mygray}{gray}{0.95}
\definecolor{mygray}{gray}{0.95}
\newcommand{\graybox}[1]{%
\begingroup
\setlength{\fboxsep}{0pt}%  
\colorbox{mygray} {% Set's the color of minipage
\begin{minipage}{\linewidth}% 	% Starts minipage
\vspace{-0.5em}%
{#1}%
\end{minipage}%
}%			% End minipage
\endgroup
}
\colorlet{mygreen}{green!55!black}
\definecolor{myyellow1}{HTML}{D89A3C}
\colorlet{myyellow}{myyellow1!90!black}
\colorlet{metablue}{blue!60!green}
\definecolor{nicerblue}{HTML}{417481} % nicer blue
\newcommand{\br}[1]{\left[#1\right]}
\newcommand{\pr}[1]{\left(#1\right)}
\newcommand{\norm}[1]{\|#1\|}
\newcommand{\dt}{\rd t}
\def\1{\bm{1}}
\def\rd{{\textnormal{d}}}
\DeclareMathAlphabet{\mathsfit}{\encodingdefault}{\sfdefault}{m}{sl}
\SetMathAlphabet{\mathsfit}{bold}{\encodingdefault}{\sfdefault}{bx}{n}
\newcommand{\E}{\mathbb{E}}
\newcommand*{\eg}{{\it e.g.,}\@\xspace}
\definecolor{RoyalBlue}{rgb}{0.25, 0.41, 0.88}
\colorlet{myred}{red!85!black}
\newcommand{\cmark}{{\color{mygreen}\ding{51}}}%
\newcommand{\xmark}{{\color{myred}\ding{55}}}%
\colorlet{metablue}{blue!60!green}
\title{Non-equilibrium Annealed Adjoint Sampler}
\author{%
  Jaemoo Choi$^{1,*}$, Yongxin Chen$^1$, Molei Tao$^1$, Guan-Horng Liu$^{2,*}$  \\[3pt]
  $^1$Georgia Institute of Technology, $^2$FAIR at Meta,  
  $^*$Core contributors \\%[1pt]
  \texttt{\{jchoi843, yongchen, mtao\}@gatech.edu, ghliu@meta.com}
}
\begin{document}

\maketitle

\begin{abstract}
Recently, there has been significant progress in learning-based diffusion samplers, which aim to sample from a given unnormalized density. Many of these approaches formulate the sampling task as a stochastic optimal control (SOC) problem using a canonical uninformative reference process, which limits their ability to efficiently guide trajectories toward the target distribution. In this work, we propose the \textbf{Non-Equilibrium Annealed Adjoint Sampler (NAAS)}, a novel SOC-based diffusion framework that employs annealed reference dynamics as a non-stationary base SDE. This annealing structure provides a natural progression toward the target distribution and generates informative reference trajectories, thereby enhancing the stability and efficiency of learning the control. Owing to our SOC formulation, our framework can incorporate a variety of SOC solvers, thereby offering high flexibility in algorithmic design. As one instantiation, we employ a lean adjoint system inspired by adjoint matching, enabling efficient and scalable training. We demonstrate the effectiveness of NAAS across a range of tasks, including sampling from classical energy landscapes and molecular Boltzmann distributions.
% These methods typically follow one of two paradigms: (i) formulating sampling as an unbiased stochastic optimal control (SOC) problem using a canonical reference process, or (ii) refining annealed path measures through importance-weighted sampling. Although annealing approaches have advantages in guiding samples toward high-density regions, reliance on importance sampling leads to high variance and limited scalability in practice. 
% In this paper, we introduce the \textbf{Non-equilibrium Annealed Adjoint Sampler (NAAS)}, a novel SOC-based diffusion sampler that leverages annealed reference dynamics without resorting to importance sampling. 
% NAAS employs a lean adjoint system inspired by adjoint matching, enabling efficient and scalable training. 
% We demonstrate the effectiveness of our approach across a range of tasks, including sampling from classical energy landscapes and molecular Boltzmann distribution.
\end{abstract}

\section{Introduction}
A fundamental task in probabilistic inference is to draw samples from a target distribution $\nu(x)$ given only its unnormalized density function, where the normalizing constant is intractable. This challenge arises across a wide range of scientific domains, including Bayesian inference \citep{box2011bayesian}, statistical physics \citep{binder1992monte}, proteins \citep{jumper2021highly, bosese}, and molecular chemistry \citep{weininger1988smiles, tuckerman2023statistical, midgleyflow}. Traditional approaches predominantly leverage Markov Chain Monte Carlo (MCMC) algorithms, which construct a Markov chain whose stationary distribution is the target distribution $\nu$ \citep{metropolis1953equation, neal2001annealed, del2006sequential}. However, these methods often suffer from slow mixing times, leading to high computational cost and limited scalability.

To overcome this limitation, recent research has focused on \textit{Diffusion Neural Samplers}---a family of learned stochastic differential equation (SDE)–based samplers in which the dynamics are parameterized by neural networks \citep{zhang2022path, vargas2023denoising, chen2024sequential, albergo2024nets, phillips2024particle, chen2024sequential}. 
Among diffusion neural samplers, many of the prior SOC-based approaches \citep{zhang2022path, vargas2023denoising, domingo2024adjoint,behjoo2025harmonic} aim to solve the stochastic optimal control (SOC) problem to construct unbiased diffusion samplers. However, these methods typically learn the controlled dynamics starting from a stationary or uninformative base reference, and therefore do not fully leverage annealed reference dynamics that progressively guide trajectories toward the target distribution.

Motivated by this limitation, we propose \textbf{Non-Equilibrium Annealed Adjoint Sampler} (\textbf{NAAS}), a new SOC-based diffusion framework that employs annealed reference dynamics as the base SDE. The annealing structure provides a natural progression toward the target distribution and generates informative reference trajectories, which substantially improve the stability and efficiency of learning the control. As demonstrated empirically, this design leads to enhanced sample quality and target matching, especially when compared with methods relying on static references. To the best of our knowledge, NAAS is the first diffusion sampler that integrates the stochastic optimal control (SOC) perspective with reference annealed SDEs.

Because our method is formulated within the SOC framework, it can naturally integrate diverse solver choices \citep{nusken2021solving, domingo2024adjoint}, allowing flexible adaptation to different optimization settings.
In contrast to existing annealed samplers \citep{albergo2024nets, phillips2024particle, chen2024sequential}, which rely on importance-weighted sampling (IWS), NAAS supports solver choices that are independent of resampling-based estimators. Among these possible instantiations, we adopt Adjoint Matching \citep{domingo2024adjoint}—a recently proposed, scalable, and practical solver that achieves low-variance gradient estimation and stable optimization in high-dimensional SOC problems.

As shown in Figure~\ref{fig:concept}, our SOC formulation can be decomposed into two complementary SOC subproblems: \textit{(i)} learning a prior distribution $\mu$ from the initial distribution $\delta_0$ with standard base reference, and \textit{(ii)} transporting $\mu$ to the target $\nu$ using controlled annealed dynamics. Together, these components define a controlled diffusion process that realizes the transport $\delta_0 \rightarrow \mu \rightarrow \nu$, resulting in a sampler whose terminal distribution matches the target.
In practice, we employ an alternating scheme to update the two control functions associated to these subproblems. We evaluate the effectiveness of NAAS on standard synthetic benchmarks and a challenging molecular generation task involving alanine dipeptide, demonstrating stable convergence and strong sample quality across settings.

Our main contributions are summarized as follows:
\begin{itemize}
    \item We introduce a SOC-based unbiased sampler where the reference dynamics is governed by an annealed SDE. This formulation allows sampling to begin from a well-behaved initial process that gradually transitions toward the target distribution.
    \item We develop an efficient optimization algorithm based on \textit{adjoint matching}, which provides a sample-efficient, and scalable approach to solving high-dimensional SOC problems.
    \item We assess our method on standard synthetic benchmarks and an alanine dipeptide molecular generation task. Our approach consistently outperforms existing baselines, demonstrating superior sample quality and diversity.
\end{itemize}

\begin{table}[t]
\caption{Compared to prior diffusion samplers, our \textbf{Non-equilibrium Annealed Adjoint Sampler (NAAS)}  effectively guides the sampling process toward the target distribution by leveraging annealed reference dynamics, without using importance weighted sampling (IWS) during training.}
\vspace{5pt}
\centering
% \resizebox{\textwidth}{!}{%
% \renewcommand{\arraystretch}{1.2}
% \setlength{\tabcolsep}{3pt}
\scalebox{0.9}{
\begin{tabular}{lcc}
     \toprule
     Method  & Require IWS & Annealed Reference SDE \\
     \midrule
     PIS {\citep{zhang2022path}} DDS {\citep{vargas2023denoising}} 
        & No & \xmark \\
     LV-PIS \& LV-DDS {\citep{richterimproved}} 
        & No & \xmark \\
    iDEM {\citep{akhound2024iterated}} & yes & \xmark \\
     AS {\footnotesize\citep{AS}} 
        & No & \xmark \\
     PDDS {\citep{phillips2024particle}} 
        & Yes & \cmark \\
     CRAFT {\citep{matthews2022continual}} SCLD {\citep{chen2024sequential}} & Yes & \cmark \\
     \midrule
     \textbf{NAAS} (Ours)
     & No & \cmark \\
     \bottomrule
\end{tabular}
}
\label{tab:compare}
\end{table}

\begin{figure}
    \centering
    \includegraphics[width=0.9\linewidth]{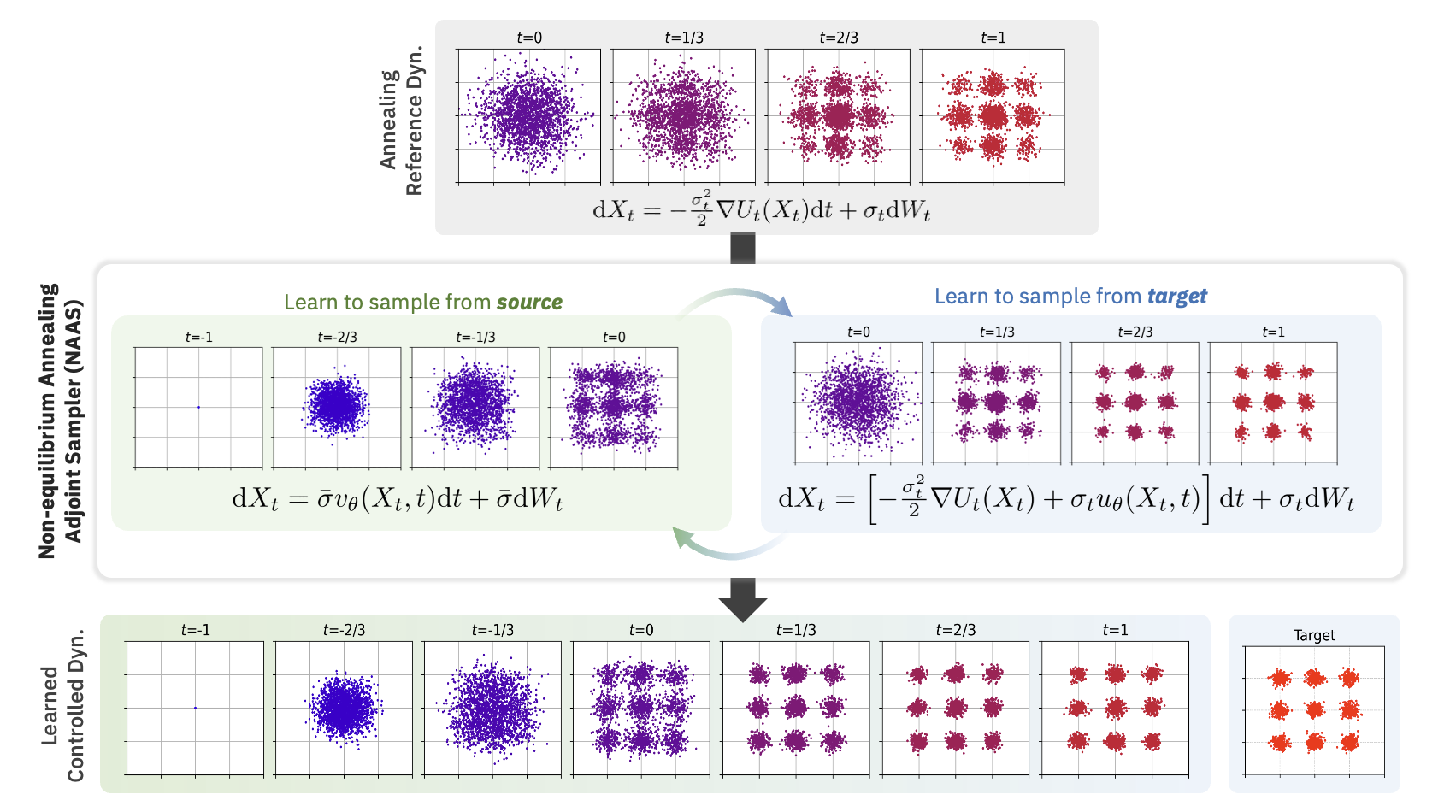}
    \caption{\textbf{Visualzation of Training Process of NAAS.} 
    Given an annealing reference dynamics that provide high-quality---yet imperfect---initial samples in high-density regions of the target distribution~$\nu$, 
    NAAS learns to sample from both the target $\nu$ and the debiased source $\mu$ by alternate optimization between two control functions, $u^\theta$ and $v^\theta$, with Adjoint Matching (\cref{eq:am-naas2}) and Reciprocal Adjoint Matching (\cref{eq:am-naas1}). 
    This iterative procedure progressively aligns the sampling path with the optimal control plan, leading to unbiased and efficient sampling from the target $\nu$.
    }
    \label{fig:concept}
\end{figure}

\section{Preliminary}
Throughout the paper let $\nu$ be the target distribution in space $\mathcal{X}$, and let $U_1:\mathcal{X} \rightarrow \mathbb{R}$ be an energy function of the distribution $\nu$, i.e. $\nu(\cdot) \propto e^{-U_1(\cdot)}$.

\paragraph{Stochastic Optimal Control (SOC)}
The SOC problem studies an optimization problem subjected to the controlled SDE while minimizing a certain objective. We focus on one type of SOC problems formulated as \citep{kappen2005path,todorov2007linearly}

\graybox{
\begin{align} \label{eq:soc}
    \min_u \ &\mathbb{E} \br{ \int^1_0 \left( \frac{1}{2} \Vert u_t(X_t) \Vert^2 + f_t(X_t) \right) \rd t + g(X_1) },\\ 
    \text{s.t. } &\rd X_t = \br{b_t (X_t) + \sigma_t u_t (X_t)} \rd t + \sigma_t \rd W_t, \quad X_0 \sim \mu, \label{eq:dyn-soc}
\end{align}}

where $\mu$ is an initial distribution of random variable $X_0$, $b_t (X_t)$ is a given reference drift and $\sigma_t$ is the diffusion term. 
Here, we denote $f: [0,1] \times \mathcal{X}\rightarrow \mathbb{R}$ as the \textit{running state cost} and $g:\mathcal{X}\rightarrow \mathbb{R}$ as the \textit{terminal cost}.
Throughout the paper, we denote $p^{\text{base}}$ and $p^\star$ as path measures induced by reference dynamics and optimal dynamics, respectively. 
The optimal control $u^\star$ is given by $u^\star_t (x) = -\sigma_t \nabla V_t (x)$, where the value function $V:[0,1]\times\mathcal{X}\rightarrow \mathbb{R}$ is defined as follows:

\begin{equation} \label{eq:value} %TODO
    V_t(x) = \inf_u \mathbb{E} \br{\int^1_t \left( \frac{1}{2} \Vert u_t (X_t) \Vert^2 + f_t(X_t) \right) \rd t + g(X_1) \ \Biggl| \ X_t = x }, \quad \text{s.t.} \quad \cref{eq:dyn-soc}.
\end{equation}

\paragraph{Adjoint Matching (AM)} 
Naively backpropagating through the SOC objective \eqref{eq:soc} induces prohibitive computational costs. Instead, \cite{domingo2024adjoint} proposed a scalable and efficient approach, called Adjoint Matching (AM), to solving the SOC problem in \eqref{eq:soc}.
The AM objective is built on the insight that the optimal control policy $u^\star$ can be expressed in terms of the adjoint process.
Specifically, AM constructs a surrogate objective by aligning the control function $u$ and a quantity derived from the adjoint process. The precise formulation is as follows:

\graybox{
    \begin{align} \label{eq:adjoint-matching}
        &\min_u \E_{X\sim p^{\bar u}} \br{ \int^1_0 \Vert u_t(X_t) + \sigma_t \bar{a}(t;X) \Vert^2 \rd t}, 
        % \ \ \mathbf{X}\sim p^{\bar{u}}, 
        \ \ \bar{a} = \text{stopgrad}(a), \ \  \bar{u} = \text{stopgrad}(u), \\
        &\text{where} \quad \frac{\rd}{\rd t} a(t;{X}) = - \br{a(t;X) \nabla b_t (X_t) + \nabla f_t(X_t)}, \quad a(1;X) = \nabla g(X_1). \label{eq:lean-adjoint}
    \end{align}
}

Here, $X \sim p^{\bar u}$ denotes full trajectories sampled from the controlled dynamics using the current detached policy $\bar u$, and $\bar a$ represents a detached \emph{lean} adjoint process $a(t;X)$, which is obtained by backward dynamics starting from the terminal condition $\nabla g(X_1)$. 

\paragraph{Reciprocal Adjoint Matching}
Recently,
Adjoint Sampling (AS) \citep{AS} propose a more efficient variant of adjoint matching tailored for the special case where both the drift term and the running cost vanish, i.e. $b\equiv 0$ and $f \equiv 0$. 
In this setting, the lean adjoint $a(t;X)$ admits a closed-form expression; $a(t;X) = \nabla g(X_1)$, which is constant over time. Furthermore, they observe that since the optimal controlled path measure satisfies the \textit{reciprocal property}:
\begin{equation}\label{eq:reciprocal}
    p^\star(X) = p^\star(X_0, X_1) p^{\text{base}}(X| X_0,X_1),
\end{equation}
% This means that 
the conditional law of intermediate states given the endpoints remains the same under both the optimal and the reference dynamics \citep{leonard2014reciprocal}. This insight simplifies the AM objective~\eqref{eq:adjoint-matching} to bypass the need to store full state-adjoint pairs $\{(X_t, a_t)\}_{t\in [0,1]}$. 
Instead, one can sample an intermediate state $X_t$ from the reference distribution $p^{\text{base}}_{t|0,1}(\cdot| X_0,X_1)$ given only the boundary states $(X_0, X_1)$.
Note that since the drift term vanished, i.e. $b\equiv 0$, this conditional sampling is tractable.

\paragraph{SOC-based (Memoryless) Diffusion Samplers}
We now consider the SOC problem \eqref{eq:soc} under the specific setting where
\vskip -0.2in
\begin{equation}\label{eq:memoryless}
f \equiv 0, \quad g(x) := \log \frac{p^{\text{base}}_1(x)}{\nu(x)}, \quad p^{\text{base}}_{0,1}(X_0, X_1) = p^{\text{base}}_0(X_0) p^{\text{base}}_1(X_1).
\end{equation}
The final condition implies that the initial and terminal states of the reference process are independent—a property we refer to as \emph{memorylessness}. Under this assumption, the value function at the initial time, $V_0(x)$, becomes \textbf{constant} by Feynman-Kac formula (see \citet{domingo2024adjoint}). Then, applying Radon-Nikodym derivative on path measures $p^\star$ and $p^{\text{base}}$ yields the following \citep{dai1991stochastic, pavon1991free,chen2021stochastic}
\begin{align*}
    \frac{\rd p^\star_{0,1}(X_0, X_1)}{\rd p^{\text{base}}_{0,1}(X_0, X_1)} \propto e^{-g(X_1) -V_0(X_0)} &\Rightarrow p^\star_1 (X_1) \propto \int_\mathcal{X} e^{-g(X_1) -V_0(X_0)} p^{\text{base}}_{0,1}(X_0, X_1) \rd X_0, \\
    & \Rightarrow p^\star (X_1) \propto e^{-g(X_1)}p^{\text{base}}_1(X_1) = \nu(X_1).
\end{align*}
That is, the terminal distribution induced by the optimal controlled SDE \textbf{exactly matches} the desired target $\nu$.
% \tao{maybe just textbf the words `constant' and `exactly matches'?}
Thus, solving the corresponding SOC problem leads to an unbiased diffusion-based sampler.
Notably, several recent diffusion samplers can be understood within this memoryless SOC framework:
\begin{itemize}[leftmargin=*]
    \item \textbf{Path Integral Sampler} \citep[\textbf{PIS};][]{zhang2022path} set $\mu = \delta_0$, where the reference dynamics automatically becomes memoryless. The reference dynamics is defined as follows:
    % \vskip -0.1in
    \begin{equation} \label{eq:soc-pis}
        \rd X_t = \sigma \rd W_t, \quad  p^{\text{base}}_1 (x) = \mathcal{N}(x; 0, \sigma^2 I).
    \end{equation}
    % \vskip -0.1in
    Several methods have been proposed to solve the associated SOC problem, including the adjoint system approach \citep{zhang2022path}, log-variance loss (LV) \citep{richterimproved}, and reciprocal adjoint matching \citep{AS}.
    \item \textbf{Denoising Diffusion Sampler} \citep[\textbf{DDS};][]{vargas2023denoising} 
    This method constructs an approximately memoryless reference process using an Ornstein-Uhlenbeck (OU) dynamic. The initial distribution is set as $\mu = \mathcal{N}(0, \sigma^2 I)$ , and the dynamics is given by:
    \vskip -0.1in
    \begin{equation} \label{eq:soc-dds}
    \mathrm{d}X_t = -\frac{\beta_t}{2} X_t \mathrm{d}t + \sigma \sqrt{\beta_t} , \mathrm{d}W_t, \quad p^{\text{base}}_1(x) = \mathcal{N}(0, \sigma^2 I),
    \end{equation}
    \vskip -0.1in
    where the annealing schedule $\beta_t$ is chosen so that $e^{\frac{1}{2} \int^1_0 \beta_t dt} \approx 0$, ensuring approximate independence between $X_0$ and $X_1$.
\end{itemize}

\section{Non-Equilibrium Annealed Adjoint Sampler}
In this section, we introduce \textbf{NAAS}, a novel SOC-based diffusion sampler that generates samples using non-equilibrium annealed processes.
Specifically, NAAS is based on a newly designed SOC problem featuring annealed reference dynamics (\Cref{sec:NAAS}). 
Since the problem requires sampling from an intractable prior $\mu$,
% in which the reference dynamics is governed by an annealed, non-stationary stochastic differential equation (SDE). 
% Theoretically, the optimal solution to this control problem enables sampling from a target distribution starting from a non-tractable, non-stationary prior $\mu$. To obtain a sample from prior distribution $\mu$, 
we present a complementary SOC problem in \Cref{sec:Debias} to approximate samples $X_0 \sim \mu$. Finally, Section~\ref{sec:Practical} provides a detailed description of the algorithm and 
its practical implementation details. A detailed proof for theorems in this section are provided in Appendix \ref{appen:proofs}.

\subsection{Stochastic Optimal Control with Annealing Paths}
\label{sec:NAAS}
Let $U_0$ and $U_1$ be the energy potentials of some tractable prior (\eg Gaussian) and the desired target distribution $\nu$, respectively. 
We define a smooth interpolation between these potentials as follows:
\vskip -0.2in
\begin{align}\label{eq:interp-energy}
U_t(x) := I(t, U_0(x), U_1(x)), \quad \text{with } U_{t=0}(x) = U_0(x), \quad U_{t=1}(x) = U_1(x),
\end{align}
\vskip -0.1in
where $I(t,\cdot,\cdot)$ denotes an interpolation scheme—such as linear or geometric interpolation.
Based on this interpolation, we define an annealed SDE that serves as our reference dynamics:
\vskip -0.1in
\begin{equation} \label{eq:escort_sde}
    \rd X_t = - \frac{\sigma^2_t}{2} \nabla U_t(X_t) \dt + \sigma_t \rd W_t, \quad X_0 \sim \mu.
\end{equation}
\vskip -0.1in
We denote a path measure induced by the reference process as $p^{\text{base}}$. 
The following theorem introduces our newly designed SOC problem whose solution yields a sampler that transforms the specific prior distribution $\mu$ to the desired target distribution $\nu \propto e^{-U_1(\cdot)}$:
\begin{theorem}[SOC-based Non-equilibrium Sampler] \label{thm:soc-naas}
Let $\{U_t\}_{t \in [0,1]}$ be a time-dependent potential function defined in \cref{eq:interp-energy}. Consider the following SOC problem:

\graybox{
\begin{equation}\begin{split}\label{eq:soc-naas}
    &\min_{u} \int_0^1 \mathbb{E}\br{\tfrac{1}{2} \lVert u_t (X_t) \rVert^2 + \partial_t U_t(X_t)} \dt, \\
    \text{\normalfont s.t. } \rd X_t &= \br{ - \frac{\sigma^2_t}{2} \nabla U_t(X_t) + \sigma_t u_t (X_t)} \dt + \sigma_t \rd W_t, \quad X_0 \sim \mu,
\end{split}\end{equation}
}

where the initial distribution $\mu$ is defined as
\vskip -0.2in
\begin{equation} \label{eq:prior}
\mu(x) \propto \exp\left( -U_0(x) - V_0(x) \right),
\end{equation}
and $V_t(x)$ is the value function associated with the corresponding path measure. Note that $V_1(x)\equiv 0$ due to the absence of terminal cost. Then, the marginal distribution $p^\star_t (x)$ of the optimally controlled process admits the following form:

\graybox{
\vspace{3pt}
\begin{equation}\label{eq:optimal-path-measure}
p^\star_t (x) \propto \exp\left( -U_t(x) - V_t (x) \right), \quad \text{for all } t \in [0,1]. 
\end{equation}
}

In particular, the terminal distribution satisfies $p^\star_1 (x) \propto \exp\left( -U_1(x) \right) =: \nu(x).$ Hence, the optimal controlled dynamics generates the desired target distribution $\nu$.
% \vspace{5pt}
\end{theorem}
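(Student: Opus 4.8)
The plan is to verify the claimed marginal \eqref{eq:optimal-path-measure} by a direct Fokker--Planck computation, using the Hamilton--Jacobi--Bellman (HJB) equation for the value function $V_t$ to cancel terms. First I would record the HJB. Since the running cost is $f_t = \partial_t U_t$, the reference drift is $b_t = -\tfrac{\sigma_t^2}{2}\nabla U_t$, and there is no terminal cost, the dynamic programming principle—after carrying out the quadratic minimization over $u$, which reproduces $u^\star_t = -\sigma_t\nabla V_t$—yields
\[
\partial_t V_t + \partial_t U_t - \tfrac{\sigma_t^2}{2}\,\nabla U_t\cdot\nabla V_t - \tfrac{\sigma_t^2}{2}\lVert\nabla V_t\rVert^2 + \tfrac{\sigma_t^2}{2}\Delta V_t = 0,
\]
with terminal condition $V_1\equiv 0$. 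Crucially, $V_t$ is determined by this equation independently of the choice of $\mu$, so the definition \eqref{eq:prior} of $\mu$ through $V_0$ is not circular.

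Next I would substitute $u^\star$ into the controlled SDE to get the optimal drift $b_t^\star = -\tfrac{\sigma_t^2}{2}\nabla U_t - \sigma_t^2\nabla V_t$, and write the associated Fokker--Planck equation $\partial_t p_t = -\nabla\cdot(b_t^\star p_t) + \tfrac{\sigma_t^2}{2}\Delta p_t$. I would then insert the ansatz $p_t(x) = Z^{-1}\exp(-U_t(x)-V_t(x))$. Abbreviating $\Phi_t := U_t + V_t$ and using $\nabla p_t = -p_t\nabla\Phi_t$ and $\Delta p_t = p_t(\lVert\nabla\Phi_t\rVert^2 - \Delta\Phi_t)$, the right-hand side collapses, after expanding $\Phi_t = U_t + V_t$ and collecting terms, to $p_t\big(\tfrac{\sigma_t^2}{2}\Delta V_t - \tfrac{\sigma_t^2}{2}\nabla U_t\cdot\nabla V_t - \tfrac{\sigma_t^2}{2}\lVert\nabla V_t\rVert^2\big)$. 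By the HJB equation this bracket equals $-\partial_t(U_t+V_t) = -\partial_t\Phi_t$, which is exactly $\partial_t p_t$ provided $Z$ is constant in time. Hence the ansatz solves the Fokker--Planck equation.

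The point needing care is the time-independence of $Z$. Here I would observe that the right-hand side of the Fokker--Planck equation is a total divergence, so it integrates to zero over $\mathcal{X}$; combined with the identity just derived this gives $\tfrac{\rd}{\rd t}\int e^{-\Phi_t}\,\rd x = \int e^{-\Phi_t}(-\partial_t\Phi_t)\,\rd x = 0$, so $\int e^{-\Phi_t}\,\rd x$ is conserved and a single $Z$ suffices. This is precisely where the design choice $f_t = \partial_t U_t$ enters: matching the running cost to $\partial_t U_t$ is exactly what makes the HJB term $\partial_t U_t$ cancel the $\partial_t U_t$ produced by differentiating the ansatz, and no other running cost would close the computation. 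It then remains to match the initial data: at $t=0$ the ansatz reads $p_0 \propto e^{-U_0 - V_0}$, which is $\mu$ by \eqref{eq:prior}; by uniqueness of solutions of the Fokker--Planck equation with prescribed initial condition, $p^\star_t = Z^{-1}e^{-U_t-V_t}$ for all $t\in[0,1]$. Finally, evaluating at $t=1$ and using $V_1\equiv 0$ gives $p^\star_1 \propto e^{-U_1} = \nu$.

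I expect the main obstacle to be promoting this formal verification to a rigorous one, i.e.\ invoking a verification theorem to guarantee that the HJB solution actually furnishes the optimal control and that the ansatz is the genuine marginal of the optimal process. This requires regularity and growth assumptions on $\{U_t\}$ ensuring well-posedness of both the HJB and Fokker--Planck equations and justifying the integration by parts that discards boundary terms at infinity; the algebraic cancellations themselves are routine once the HJB equation is available.
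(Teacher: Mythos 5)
Your proposal is correct and follows essentially the same route as the paper's proof: both combine the HJB equation for $V_t$ with the Fokker--Planck equation of the optimally controlled process and exploit the cancellation produced by choosing the running cost $f_t=\partial_t U_t$. The paper merely packages the identical computation through the Hopf--Cole factorization $p^\star_t=\varphi_t\hat\varphi_t$ with $\varphi_t=e^{-V_t}$, identifying $\hat\varphi_t\propto e^{-U_t}$ in closed form, which is equivalent to your direct substitution of the ansatz $p_t\propto e^{-U_t-V_t}$ into the Fokker--Planck equation.
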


\paragraph{Sketch of Proof}
We outline the proof to provide an intuition behind the design of the objective function in the SOC formulation of \cref{eq:soc-naas}. Suppose $\varphi_t (x) = e^{-V_t (x)}$ and $\hat\varphi_t (x) = p^\star_t (x)e^{V_t (x)}$. Then, the pair $(\varphi, \hat\varphi)$ satisfies the following PDEs, namely Hopf-Cole transform \citep{hopf1950partial}:
\begin{align}\label{eq:hopf-cole}
    \begin{cases}
    \partial_t \varphi_t (x) = \frac{\sigma^2_t}{2} \nabla U_t \cdot \nabla \varphi_t - \frac{\sigma^2_t}{2} \Delta \varphi_t + \partial_t U_t  \varphi_t , \quad \varphi_0 (x)\hat\varphi_0 (x) = \mu(x), \\
    \partial_t \hat\varphi_t (x) = \frac{\sigma^2_t}{2} \nabla \cdot (\nabla U_t \hat\varphi + \nabla \hat\varphi_t ) - \partial_t U_t \hat\varphi_t , \quad \ \varphi_1 (x)\hat\varphi_1 (x) = \nu(x).
    \end{cases}
\end{align} 
Note that the $\partial_t U_t$ terms in both PDEs arise from the running state cost in the SOC objective. The inclusion of this term allows us to derive closed-form expression of $\hat\varphi$:
\begin{equation*}
    \hat\varphi_t (x) = C e^{-U_t(x)},
\end{equation*}
up to some multiplicative constant $C$. By the definition, we obtain
\begin{equation*}
    p^\star_t  (x) = \varphi_t (x) \hat\varphi_t (x) \propto e^{-U_t(x)} \varphi_t (x) = \exp(-U_t(x)- V_t (x)),
\end{equation*}
% \vskip -0.1in
which recovers \cref{eq:optimal-path-measure} claimed in the theorem.

\begin{remark}
Importantly, the SOC problem \eqref{eq:soc-naas} generally has an \textbf{intractable prior distribution $\mu$}, making the direct sampling from $\mu$ non-trivial.
In Section~\ref{sec:Debias}, we propose a practical method to obtain a sample from $\mu$.
\end{remark}

\paragraph{Interpretation of Theorem \ref{thm:soc-naas}}
In this paragraph, we highlight a special case in which $\mu$ becomes tractable. This case reveals an insightful connection to Denoising Diffusion Samplers (DDS)~\citep{vargas2023denoising}.
Since $U_0$ is prescribed and often tractable, $\mu \propto e^{-U_0(x) - V_0(x)}$ is tractable when $V_0(x)$ is constant. This can be achieved by setting the memoryless reference dynamics from $[0, \tau)$ where $U_{t \in [0,\tau)} := U_0$ for some $\tau < 1$, which ensures that the uncontrolled dynamics are sufficiently mixed between $t \in [0,\tau]$. 
% \jaemoo{<-Difficult.}
In particular, if we choose $\tau=1$, our SOC problem \eqref{eq:soc-naas} reduces to DDS \citep{vargas2023denoising}:
\begin{corollary}\label{cor:dds}
    When $U_{t \in [0,1)}(x) := U_0(x) = \frac{\Vert x\Vert^2}{2\bar\sigma^2}$ and $\beta_t = {\bar\sigma}^2 \sigma^2_t$, the problem \eqref{eq:soc-naas} degenerates to    
    \begin{equation}\begin{split}\label{eq:soc-special}
        &\min_{u} \mathbb{E}\br{\int_0^1 \tfrac{1}{2} \lVert u_t (X_t) \rVert^2 \dt + U_1(X_1) - U_0(X_1)}, \\
        \text{\normalfont s.t. } \rd X_t &= \br{ - \frac{\beta_t}{2} X_t + \bar\sigma \sqrt{\beta_t} u_t (X_t)} \dt + \bar\sigma \sqrt{\beta_t} \rd W_t, \quad X_0 \sim \mu,
    \end{split}\end{equation}
    and setting $\mu\propto e^{-U_0}$ and $\beta_t$ to a memoryless noise schedule recovers DDS formulation in \cref{eq:soc-dds}. 
\end{corollary}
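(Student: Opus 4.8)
The plan is to treat \Cref{cor:dds} as a direct specialization of \Cref{thm:soc-naas}: I would substitute the quadratic prior potential $U_0(x) = \lVert x\rVert^2/(2\bar\sigma^2)$ together with the stated reparametrization $\beta_t = \bar\sigma^2\sigma_t^2$ into the SOC problem \eqref{eq:soc-naas}, and then simplify the objective and the controlled dynamics separately. No analytic machinery beyond the theorem is required; the content is careful bookkeeping of how the ``constant-then-jump'' interpolation rewrites the running cost, plus a verification that the memoryless schedule renders the prior $\mu$ tractable.

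For the objective, I would first observe that the special interpolation pins $U_t \equiv U_0$ on $[0,1)$, so $\partial_t U_t \equiv 0$ there and all temporal variation of $U$ is concentrated at the endpoint $t=1$, where it jumps from $U_0$ to $U_1$. Consequently the running state cost collapses to a terminal cost,
\[
\int_0^1 \partial_t U_t(X_t)\,\dt = U_1(X_1) - U_0(X_1),
\]
converting the state-running-cost problem into a pure terminal-cost problem. Combined with the unchanged kinetic term $\tfrac{1}{2}\lVert u_t(X_t)\rVert^2$, this reproduces the objective of \eqref{eq:soc-special}.

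For the dynamics, I would compute $\nabla U_0(x) = x/\bar\sigma^2$ and insert it into the reference drift $-\tfrac{\sigma_t^2}{2}\nabla U_t$, which linearizes it into an Ornstein--Uhlenbeck term proportional to $X_t$. Under the reparametrization $\beta_t = \bar\sigma^2\sigma_t^2$, the uncontrolled drift collapses to $-\tfrac{\beta_t}{2}X_t$ and the shared control/diffusion coefficient $\sigma_t$ rewrites as $\bar\sigma\sqrt{\beta_t}$, so the SDE in \eqref{eq:soc-naas} becomes exactly that of \eqref{eq:soc-special}. To finish the reduction to DDS \eqref{eq:soc-dds}, I would invoke the memoryless construction from the ``Interpretation'' discussion: running $U_t \equiv U_0$ over an initial window lets the uncontrolled OU dynamics mix, forcing the value function $V_0$ to be constant; then $\mu \propto e^{-U_0 - V_0} \propto e^{-U_0} = \mathcal{N}(0,\bar\sigma^2 I)$ is tractable and matches the DDS initial law, while the drift $-\tfrac{\beta_t}{2}X_t$ and diffusion $\bar\sigma\sqrt{\beta_t}$ coincide with \eqref{eq:soc-dds}.

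The main obstacle is making the collapse of the running cost rigorous: since $U_t$ is discontinuous in $t$ at the terminal time, $\partial_t U_t$ is only a distribution (a Dirac mass at $t=1$), so I would justify the identity $\int_0^1 \partial_t U_t(X_t)\,\dt = U_1(X_1) - U_0(X_1)$ as a limit over smooth interpolations whose time variation concentrates near $t=1$, and argue that the associated sequence of SOC problems converges to \eqref{eq:soc-special}, using path continuity so that $X_t \to X_1$ on the support of the concentrating derivative. The secondary point requiring care is verifying that the memoryless schedule genuinely makes $V_0$ constant, which is precisely the independence-of-endpoints (memorylessness) condition already used for DDS in the preliminaries; I would check that it carries over unchanged under the present quadratic-$U_0$ parametrization so that $\mu$ simplifies to the stated Gaussian.
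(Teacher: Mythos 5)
Your reduction matches the paper's proof essentially step for step: substitute the quadratic $U_0$ to linearize the reference drift into the OU form, collapse the running cost $\int_0^1 \partial_t U_t(X_t)\,\dt$ into the terminal cost $U_1(X_1)-U_0(X_1)$, and use memorylessness (which the paper makes explicit via the Feynman--Kac identity $e^{-V_0(X_0)}=\E\br{e^{-(U_1(X_1)-U_0(X_1))}\mid X_0}$) to conclude that $V_0$ is constant and hence $\mu\propto e^{-U_0}$. Your extra care with the distributional derivative of $U_t$ at $t=1$ is a refinement the paper omits; the only bookkeeping slip is that the consistent reparametrization is $\beta_t=\sigma_t^2/\bar\sigma^2$ (as used in the paper's own proof), not $\beta_t=\bar\sigma^2\sigma_t^2$ as written in the corollary statement, since otherwise $\bar\sigma\sqrt{\beta_t}\neq\sigma_t$.
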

Because $U_{t \in [0,1)}(x) = U_0(x)$, the annealing mechanism is absent in DDS, implying that the reference dynamics does not guide the particles toward the desired target distribution $\nu$.

\subsection{Debiasing Annealed Adjoint Sampler through Learning Prior}
\label{sec:Debias}
From now on, we consider the specific case where the initial potential is given by $U_0(x) = \frac{\Vert x\Vert^2}{2{\bar\sigma}^2}$, i.e. $e^{-U_0} \propto \mathcal{N}(0, {\bar\sigma}^2 I)$. 
We begin by formulating a stochastic optimal control (SOC) problem whose optimal dynamics generates samples $X_0$ from the (intractable) prior distribution $\mu$.
\begin{lemma}\label{lem:estimate-prior}
    Let $U_0(x) = \frac{\Vert x\Vert^2}{2{\bar\sigma}^2}$.
    Given the value function $V:\mathcal{X}\times[0,1]\rightarrow \mathbb{R}$ of the SOC problem \cref{eq:soc-naas}, consider the following SOC problem:
        \begin{align} \label{eq:soc-prior}
            \begin{split}
                &\min_v \mathbb{E} \br{ \int^0_{-1} \frac{1}{2} \Vert v_t (X_t) \Vert^2 \rd t +V_0 (X_0) }, \\
                &\text{s.t.} \quad  \rd X_t = \bar\sigma v_t (X_t) \rd t + \bar\sigma \rd W_t, \quad  X_{-1} \sim \delta_0.
            \end{split}
        \end{align}
    Note that the time interval of the underlying dynamics is $[-1,0]$. Then, the terminal distribution induced by optimal control dynamics is $\mu := e^{-U_0(\cdot)-V_0(\cdot)}$.
\end{lemma}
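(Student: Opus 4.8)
The plan is to recognize that the prior-sampling problem \eqref{eq:soc-prior} is, after relabeling the interval $[-1,0]$ in place of $[0,1]$, a special instance of the memoryless diffusion-sampler framework set up in \eqref{eq:memoryless}. First I would identify the three defining ingredients of that framework. The reference dynamics $\rd X_t = \bar\sigma\,\rd W_t$ has no drift and the objective carries no running state cost, so $b\equiv 0$ and $f\equiv 0$; moreover, starting from the point mass $X_{-1}\sim\delta_0$ makes the reference process automatically memoryless, exactly as in PIS, since the joint law factorizes trivially as $p^{\text{base}}_{-1,0}(X_{-1},X_0)=\delta_0(X_{-1})\,p^{\text{base}}_0(X_0)$. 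Integrating the driftless SDE from $-1$ to $0$ gives the reference terminal marginal $p^{\text{base}}_0=\mathcal{N}(0,\bar\sigma^2 I)\propto e^{-U_0}$, using $U_0(x)=\tfrac{\|x\|^2}{2\bar\sigma^2}$.

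Next I would match the terminal cost to the log-density-ratio form demanded by \eqref{eq:memoryless}. The terminal cost here is $g(x)=V_0(x)$, where $V_0$ is the value function of the original annealed problem \eqref{eq:soc-naas}, and I would observe that $V_0(x)=\log\frac{p^{\text{base}}_0(x)}{\mu(x)}$ up to an additive constant, with $\mu\propto e^{-U_0-V_0}$ as in \eqref{eq:prior}: indeed $\log\frac{e^{-U_0(x)}}{e^{-U_0(x)-V_0(x)}}=V_0(x)$, and additive constants in $g$ leave the optimal control unchanged. This identification is the crux of the reduction, certifying that the present problem is precisely the memoryless sampler whose target is $\mu$.

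With these identifications I would invoke the Radon--Nikodym computation already carried out in the excerpt. Writing $\tilde V$ for the value function of \eqref{eq:soc-prior}, its value at the initial time $\tilde V_{-1}$ is constant, both by the memorylessness argument and trivially because $X_{-1}=0$ deterministically, so
\[
\frac{\rd p^\star_{-1,0}(X_{-1},X_0)}{\rd p^{\text{base}}_{-1,0}(X_{-1},X_0)}\propto e^{-g(X_0)-\tilde V_{-1}(X_{-1})}\propto e^{-V_0(X_0)}.
\]
Marginalizing over $X_{-1}$ and using $p^{\text{base}}_0\propto e^{-U_0}$ then yields $p^\star_0(X_0)\propto e^{-V_0(X_0)}\,p^{\text{base}}_0(X_0)\propto e^{-U_0(X_0)-V_0(X_0)}=\mu(X_0)$, which is the claim.

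The main obstacle is conceptual rather than computational: one must keep two value functions cleanly separated---the value function $V$ of the annealed problem \eqref{eq:soc-naas}, which supplies the terminal cost $g=V_0$ here, versus the value function $\tilde V$ of the auxiliary problem \eqref{eq:soc-prior}, whose collapse at the initial time $-1$ drives the memoryless reduction. Verifying that the driftless, point-mass-initialized reference genuinely satisfies the memorylessness hypothesis, and that the constant offset in $g$ is immaterial, are the only points needing care; once they are in place the result follows directly from the established framework.
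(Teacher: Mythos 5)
Your proposal is correct and follows essentially the same route as the paper: both identify the terminal cost $V_0(X_0)$ with $\log\bigl(\mathcal{N}(X_0;0,\bar\sigma^2 I)/\mu(X_0)\bigr)$ up to an additive constant via $\mu\propto e^{-U_0-V_0}$, and then recognize \eqref{eq:soc-prior} as an instance of the memoryless (PIS-type) SOC sampler with driftless Brownian reference from $\delta_0$, whose optimal terminal law is $\mu$. The only difference is presentational: you spell out the Radon--Nikodym marginalization explicitly, whereas the paper simply cites the reduction to the PIS formulation \eqref{eq:soc-pis}.
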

By the definition of the value function in \cref{eq:value}, $V_0(x)$ is the expected cost of the SOC problem \eqref{eq:soc-naas} conditioned by $X_0=x$.
Thus, $V_0(\cdot)$ in \cref{eq:soc-prior} can be replaced by \cref{eq:soc-naas}.
As a result, we arrive at a unified SOC problem that defines an unbiased annealed sampling framework:
\begin{theorem}\label{thm:combined}
    Consider the following SOC problem:
    
    \graybox{
        \begin{align} \label{eq:soc-unbias-naas}
            &\min_{u,v} \mathbb{E} \br{ \int^0_{-1} \frac{1}{2} \Vert v_t(X_t) \Vert^2 \rd t + \int^1_{0} \frac{1}{2} \Vert u_t (X_t) \Vert^2 \rd t + \int^1_0 \partial_t U_t (X_t) \rd t }, \\
            &\text{s.t. } \rd X_t = \bar\sigma v_t (X_t) \rd t + \bar\sigma \rd W_t \quad \text{for } -1 \leq t < 0, \quad \quad X_{-1} \sim \delta_0, \label{eq:dyn-prior} \\
            &  \rd X_t = \br{ - \frac{\sigma^2_t}{2} \nabla U_t(X_t) + \sigma_t u_t(X_t)} \dt + \sigma_t \rd W_t \quad \text{for} \quad 0 \leq t \leq 1. \label{eq:dyn-naas}
        \end{align}
    }
    
    Then, the optimal control dynamics is an unbiased sampler, i.e., $p^\star_1 = \nu \propto e^{-U_1}$.
\end{theorem}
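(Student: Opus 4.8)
The plan is to prove \Cref{thm:combined} by a dynamic programming decomposition that reduces the joint minimization over $(u,v)$ to the two SOC subproblems already established in \Cref{thm:soc-naas} and \Cref{lem:estimate-prior}. The essential structural fact is that the objective in \cref{eq:soc-unbias-naas} is additively separable across the time segments $[-1,0]$ and $[0,1]$: the control $v$ acts only through \cref{eq:dyn-prior} on $[-1,0]$ and thus only shapes the law of $X_0$, while $u$ acts only through \cref{eq:dyn-naas} on $[0,1]$ and transports any given starting state forward. Because the SDE is Markov at the junction $t=0$, the cost incurred on $[0,1]$ depends on the past only through $X_0$, which is precisely what allows the infimum to factorize.

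First I would fix an arbitrary admissible $v$ and perform the inner minimization over $u$ while conditioning on $X_0 = x$. Since the running integrand on $[0,1]$ is exactly $\tfrac{1}{2}\lVert u_t\rVert^2 + \partial_t U_t$ and the controlled dynamics \cref{eq:dyn-naas} coincides verbatim with that of \cref{eq:soc-naas}, this conditional infimum is, by the definition of the value function in \cref{eq:value},
\begin{equation*}
\inf_{u}\, \mathbb{E}\!\left[\int_0^1 \Bigl(\tfrac{1}{2}\lVert u_t(X_t)\rVert^2 + \partial_t U_t(X_t)\Bigr)\dt \,\Big|\, X_0 = x\right] = V_0(x),
\end{equation*}
where there is no terminal cost (so $V_1 \equiv 0$) exactly as in \Cref{thm:soc-naas}. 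Substituting this cost-to-go back into \cref{eq:soc-unbias-naas} collapses the joint problem into $\min_v \mathbb{E}[\int^0_{-1}\tfrac{1}{2}\lVert v_t\rVert^2\dt + V_0(X_0)]$ subject to \cref{eq:dyn-prior}, which is precisely the prior-generating SOC problem \cref{eq:soc-prior} of \Cref{lem:estimate-prior}.

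Next I would chain the two conclusions. By \Cref{lem:estimate-prior}, the optimal $v^\star$ drives the law of $X_0$ to $\mu \propto \exp(-U_0 - V_0)$, which is exactly the initial distribution \cref{eq:prior} required by \Cref{thm:soc-naas}. With this initial law in hand, \Cref{thm:soc-naas} guarantees that the optimally controlled annealed dynamics on $[0,1]$ yields the marginal \cref{eq:optimal-path-measure}, and in particular $p^\star_1 \propto \exp(-U_1) = \nu$. Composing the two optimally controlled segments therefore produces an unbiased sampler.

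The main obstacle is verifying the consistency of $V_0$ across the coupling: the $V_0$ that defines $\mu$ inside \Cref{thm:soc-naas} must be the \emph{same} function as the cost-to-go at $t=0$ generated by the $u$-subproblem embedded in the combined objective. The Bellman step above supplies exactly this identification, since the $[0,1]$ portion of \cref{eq:soc-unbias-naas} is literally \cref{eq:soc-naas}, so its optimal continuation cost is $V_0$ by definition. What requires care is justifying that the joint infimum genuinely factorizes into the sequential infima (inner over $u$, outer over $v$) with no cross-segment coupling lost; this is where I would lean on the additive separability of the objective together with the conditional-expectation tower property and the Markov property of the dynamics at $t=0$.
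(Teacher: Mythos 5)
Your proof is correct, but it takes a genuinely different route from the paper's formal proof. You argue by a dynamic programming decomposition: since the objective is additively separable across $[-1,0]$ and $[0,1]$ and the dynamics are Markov at $t=0$, the inner minimization over $u$ conditioned on $X_0=x$ yields exactly the value function $V_0(x)$ of \cref{eq:soc-naas} (which, crucially, does not depend on the law of $X_0$), reducing the outer problem to \cref{eq:soc-prior}; chaining \Cref{lem:estimate-prior} and \Cref{thm:soc-naas} then gives $p^\star_1 = \nu$. This essentially formalizes the substitution argument the paper only sketches in the main text just before the theorem statement (``$V(\cdot,0)$ in \cref{eq:soc-prior} can be replaced by \cref{eq:soc-naas}''). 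The paper's appendix proof instead works directly with the combined problem: it writes the HJB equation piecewise on $[-1,1]$, applies the Hopf--Cole transform to obtain the paired PDEs \cref{eq:hopf-cole1}--\cref{eq:hopf-cole2}, solves $\hat\varphi_t$ in closed form ($Ce^{-U_t}$ on $[0,1]$, a heat-kernel Gaussian on $[-1,0]$), and reads off the boundary marginals $\delta_0$, $\mu$, $\nu$. Your modular argument is shorter and makes the design logic of the combined objective transparent, at the cost of needing the (standard but nontrivial) factorization of the joint infimum into sequential infima over feedback controls, which you correctly flag; the paper's PDE route is self-contained and, as a byproduct, produces the explicit Gaussian form of $\hat\varphi_t$ on $[-1,0]$ that the paper reuses to derive the conditional bridge distribution \cref{eq:adjoint-naas1} for reciprocal adjoint matching -- information your argument does not surface.
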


\subsection{Practical Implementation}
\label{sec:Practical}

In this section, we present a practical algorithm for solving the SOC problem in \cref{eq:soc-unbias-naas}.
Our approach \textbf{alternates} between updating the control policies $u$ and $v$ for the annealed dynamics \eqref{eq:dyn-naas} and the prior-estimation dynamics \eqref{eq:dyn-prior}, respectively. 
We employ adjoint matching to gain a scalability and efficiency. The following two paragraphs describes the training objective for the controls $u$ and $v$.

\vspace{3pt}
\paragraph{Adjoint Matching for Annealed Dynamics}
We now describe how to optimize the control function $u_t (x)$ for annealed control dynamics \cref{eq:dyn-naas}. Let $u^\theta$ be the parameterization of this control.
Given an initial state $X_0$ sampled from \cref{eq:dyn-prior}, we simulate the forward trajectory $X = \{X_t \}_{t\in [0,1]}$ using the current control estimate $u^\theta$. 
To update the control $u^\theta$, we adapt the AM framework \cref{eq:adjoint-matching}, yielding

\graybox{
\begin{align}\label{eq:am-naas2}
    &\mathcal{L}(u^\theta; {X}) = \int^1_0 \Vert u^\theta_t(X_t) + \sigma_t a(t;X)\Vert^2 dt, \quad X \sim p^{\bar u}, \quad \bar u = \text{stopgrad} (u^\theta), \\
    \text{where}& \ \  \frac{\rd a(t;X)}{\rd t} \! = - \left[ a(t;X) \cdot \left(- \frac{\sigma^2_t}{2} \nabla^2 U_t (X_t) \right) + \partial_t \nabla U_t (X_t) \right], \ a(1; X) = 0, \label{eq:adjoint-naas2} 
\end{align}}

where $\nabla^2 U_t$ is the Hessian of $U_t(x)$ whose vector-Hessian product can be computed efficiently by 
\begin{equation}\label{eq:vector-hessian}
  a(t;X) \cdot \left(- \frac{\sigma^2_t}{2} \nabla^2 U_t (X_t) \right) = -\frac{\sigma^2_t}{2} \nabla \left( \bar{a}(t;X) \nabla  U_t (X_t) \right) \quad \bar{a} = \text{stopgrad}(a).
\end{equation}
Note that this AM scheme is not as computationally efficient as AS \citep{AS}, since it requires solving a backward ODE due to the use of annealed reference dynamics. Nevertheless, it remains more efficient than alternative approaches such as naive backpropagation or the log-variance method proposed in \citet{richterimproved}.

\paragraph{Reciprocal Adjoint Matching for Sampling the Prior}
Similarly, we address the optimization of the control function $v_t(x)$ corresponding to \cref{eq:dyn-prior}, parameterized as $v^\theta$. We employ the reciprocal AM framework to learn $v^\theta$.
Specifically, we begin by sampling the initial states $X_0$ from controlled dynamics with current control $v^\theta$. 
We then generate full trajectories ${X}= \{X_t \}_{t\in [0,1]}$ using the annealed dynamics associated with $u^\theta$, and simulate the backward adjoint process via \cref{eq:adjoint-naas2}. Then, we obtain a state-adjoint pair $(X_0, a_0)$ on $t=0$.
Since the reference dynamics w.r.t. $v$ has vanishing drift and state cost, we can apply reciprocal adjoint matching \citep{AS} as follows:

\graybox{
\begin{align}\label{eq:am-naas1}
    &\mathcal{L}(v^\theta; {X}) = \int^0_{-1} \Vert v^\theta_t (X_t) + \sigma_t a_0 \Vert^2 dt,  \quad \bar u = \text{stopgrad} (u), \\
    &\text{where} \ \ X_t \sim p^{\text{base}}_{t|0}(\cdot| X_0) := \mathcal{N}\left((1+t)X_0, (1+t)(2-t){\bar\sigma}^2 I \right). \label{eq:adjoint-naas1} 
\end{align}}

Thus, to update $v^\theta$, it is only required to cache state-adjoint pair $(X_0, a_0)$ at $t=0$, which is memory efficient.
For the detailed derivation of conditional distribution, see Appendix \ref{appen:proofs}.

\vspace{3pt}
\textbf{Replay Buffers}$\quad$
Computing the full forward-backward system at every iteration is computationally expensive. To address this, we employ replay buffers that store the necessary state-adjoint pairs used to train the control networks. Specifically, instead of recomputing the lean adjoint system at each step, we refresh the buffer every 200–500 iterations, which significantly reduces computational overhead.
For training $u^\theta$, we maintain a buffer $\mathcal{B}_u$ containing triplets $(t, X_t, a_t)$ for $t\in [0,1]$. For training $v^\theta$, we use a separate buffer $\mathcal{B}_v$ to store the state-adjoint pair $(X_0, a_0)$ at $t=0$.
The implementation details are provided in Appendix \ref{appen:implementation-details}.

% \end{equation}
\begin{algorithm}[t]
\caption{Non-equilibrium Annealed Adjoint Sampler (NAAS)}
\label{alg:naas}
    \begin{algorithmic}[1]
        \REQUIRE Tractable energy $U_0$, twice-differentiable target energy $U_1 (x)$, two parametrized networks $u^\theta:[0,1]\times\mathcal{X}\rightarrow \mathcal{X}$ and $v^\theta:  [-1,0]\times\mathcal{X}\rightarrow \mathcal{X}$ for control parametrization, buffers $\mathcal{B}_u$ and $\mathcal{B}_v$. Number of epochs $N_u$ and $N_v$.
        \FOR{stage $k$ \textbf{in} $1, 2 , \dots$}
            \FOR{epoch \textbf{in} $1, 2 , \dots N_u$}
            \STATE Sample $\{X_t\}_{t\in[-1,1]}$ through \cref{eq:dyn-prior} and \cref{eq:dyn-naas}. \COMMENT{forward pass}
            \STATE Compute lean adjoint $\{a_t\}_{t\in[0,1]}$ through \cref{eq:adjoint-naas2}. \COMMENT{backward pass}
            \STATE Push $\{(t, a_t, X_t)\}_{t\in[0,1]}$ into a buffer $\mathcal{B}_u$. \COMMENT{add to buffer}
            \STATE Optimize $u^\theta$ by \cref{eq:am-naas2} with samples from $\mathcal{B}_u$. \COMMENT{adjoint matching}
            \ENDFOR
            \FOR{epoch \textbf{in} $1, 2 , \dots N_v$}
            \STATE Sample $\{X_t\}_{t\in[-1,1]}$ through \cref{eq:dyn-prior} and \cref{eq:dyn-naas}. \COMMENT{forward pass}
            \STATE Compute lean adjoint $\{a_t\}_{t\in[0,1]}$ through \cref{eq:adjoint-naas2}. \COMMENT{backward pass}
            \STATE Push $(X_0, a_0)$ into a buffer $\mathcal{B}_v$. \COMMENT{add to buffer}
            \STATE Optimize $v^\theta$ by \cref{eq:am-naas1} and \eqref{eq:adjoint-naas1} with samples from $\mathcal{B}_v$. \COMMENT{reciprocal adjoint matching}
            \ENDFOR
        \ENDFOR
    \end{algorithmic}
\end{algorithm}

\vspace{3pt}
\paragraph{Algorithm}
As described in Algorithm~\ref{alg:naas}, we optimize the control functions $u^\theta$ and $v^\theta$ in an alternating fashion using the adjoint matching objective desribed in \cref{eq:am-naas2} and \cref{eq:am-naas1}. To compute the required state-adjoint pair, we solve the forward-backward systems defined in \cref{eq:adjoint-naas2} and \cref{eq:adjoint-naas1}. To avoid the computational overhead of solving these systems at every iteration, we adopt a replay buffer strategy that caches previously computed state-adjoint pairs. 
% This reuse of trajectory information significantly improves training efficiency. For the annealed path we use simple linear interpolation between $U_0$ and $U_1$ w.r.t. time $t\in[0,1]$.

\paragraph{The role of two optimization stages}
The two optimization problems in our framework, \textit{(i)} learning the prior distribution $\mu$ over the interval $[-1,0]$, and \textit{(ii)} optimizing the controlled annealed dynamics over $[0,1]$, exhibit fundamentally different learning dynamics and objectives.
\begin{itemize}[leftmargin=*]
    \item The first-stage optimization over $[-1,0]$ learns a prior distribution $\mu = e^{-U_0-V_0}$, where $U_0$ is a simple, prescribed energy function (typically corresponding to Gaussian distribution). This prior is designed to have broad and smooth support, in contrast to the often sharp or multimodal structure of the target distribution $\nu$. The objective here is not to capture fine-scale structure, but to construct a low-complexity distribution that can be easily sampled using a simple base process, while still broadly covering the support of $\nu$.

    \item In contrast, the second-stage optimization over $[0,1]$ focuses on refining the annealed dynamics to guide trajectories from $\mu$ toward the target distribution $\nu$. Since the annealed reference dynamics already bias the flow toward high-density regions of the target, the control at this stage primarily serves to correct residual mismatches. Because we only required to learn the residual mismatch, the second optimization problem is also easy to learn.
\end{itemize}

By decoupling these two objectives, the alternating scheme enables more stable and effective training. Each optimization stage solves a simpler and more focused subproblem: the first constructs a well-behaved initialization, and the second incrementally corrects the flow toward the target. This separation reduces optimization interference and improves both convergence and sample quality.

\section{Related Works} \label{sec:related-work}
\paragraph{Sequential Monte Carlo and Annealed Sampler}
Many recent works have focused on Markov Chain Monte Carlo (MCMC) and Sequential Monte Carlo (SMC) methods \citep{chopin2002sequential, del2006sequential,guo2024provable}, which sequentially sample from a series of intermediate (often annealed) distributions that bridge a tractable prior and a complex target distribution. A large number of these methods leverage annealed importance sampling (AIS) \citep{neal2001annealed,guo2025complexity} to correct for discrepancies between the proposal and target distributions, often improving the quality of proposals over time. Several earlier approaches that do not rely on neural networks \citep{wu2020stochastic, geffner2021mcmc, heng2020controlled} have been developed to learn transition kernels for MCMC. Later, transition kernels has been explicitly parameterized using normalizing flows \citep{dinh2014nice, rezende2015variational}, enabling more expressive and tractable models. Based on this idea, various approaches \citep{arbel2021annealed, matthews2022continual} have been proposed by refining SMC proposals via variational inference with normalizing flows.
More recently, diffusion-based annealed samplers \citep{vargas2023transport, akhound2024iterated, phillips2024particle}, inspired by score-based generative modeling \citep{song2021score}, have gained attention. These methods utilize annealed dynamics derived from denoising diffusion processes and typically combine importance sampling with a matching loss to account for the mismatch between the sampling trajectory and the target distribution. On other line of work, Sequential Control for Langevin Dynamics (SCLD) \citep{chen2024sequential} improves training efficiency by employing off-policy optimization over annealed dynamics. However, it still relies on importance sampling to ensure accurate marginal sampling.

% \paragraph{Non-equibrilium and Other Related Approaches}
% \jaemoo{TODO: NETS \citep{albergo2024nets}, FEAT \citep{he2025feat}, Jarzynski \citep{jarzynski1997nonequilibrium}, \citep{}}

\begin{table} [t]
    \centering
      \setlength\tabcolsep{5.2pt}
       \caption{Results on synthetic energy functions. We report Sinkhorn distance ($\downarrow$) and MMD ($\downarrow$). References for each comparison method are provided in Table~\ref{tab:compare} and Section~\ref{sec:exp}. All comparison values are taken from SCLD \citep{chen2024sequential}.}
       \label{tab:toy}
       \vspace{5pt}
    \centering
\resizebox{\textwidth}{!}
{%
\renewcommand{\arraystretch}{1.2}
\setlength{\tabcolsep}{5pt}
        \begin{tabular}{lcccccc}
        \toprule
        & MW54 ($d=5$) & \multicolumn{2}{c}{Funnel ($d=10$)} & GMM40 ($d=50$) & \multicolumn{2}{c}{MoS ($d=50$)} \\
         \cmidrule(lr){2-2} \cmidrule(lr){3-4}  \cmidrule(lr){5-5} \cmidrule(lr){6-7}
        Method & Sinkhorn & MMD & Sinkhorn & Sinkhorn &  MMD  & Sinkhorn  \\
        \midrule
        SMC       & 20.71 $\pm$ 5.33  &  -                 & 149.35 $\pm$ 4.73   & 46370.34 $\pm$ 137.79 &  -                  & 3297.28 $\pm$ 2184.54 \\
        SMC-ESS & 1.11 $\pm$ 0.15  &  -                 & $\mathbf{117.48}$ $\pm$ 9.70   & 24240.68 $\pm$ 50.52 & - &  1477.04 $\pm$ 133.80  \\
        CRAFT   & 11.47 $\pm$ 0.90  &  0.115 $\pm$ 0.003 & 134.34 $\pm$ 0.66   & 28960.70 $\pm$ 354.89 &   0.257 $\pm$ 0.024 & 1918.14 $\pm$ 108.22  \\
        DDS     & 0.63 $\pm$ 0.24   &  0.172 $\pm$ 0.031 & 142.89 $\pm$ 9.55   & 5435.18  $\pm$ 172.20 &   0.131 $\pm$ 0.001 & 2154.88 $\pm$ 3.861  \\
        PIS     & 0.42 $\pm$ 0.01   &  -                 &     -               & 10405.75 $\pm$ 69.41  &   0.218 $\pm$ 0.007 & 2113.17 $\pm$ 31.17  \\
        AS  &  0.32 $\pm$ 0.06 & - & - & 18984.21 $\pm$ 62.12 & 0.210 $\pm$ 0.004  & 2178.60 $\pm$ 54.82  \\
        CMCD-KL & 0.57 $\pm$ 0.05   &  0.095 $\pm$ 0.003 & 513.33 $\pm$ 192.4  & 22132.28 $\pm$ 595.18 &  -                  & 1848.89 $\pm$ 532.56  \\
        CMCD-LV  & 0.51 $\pm$ 0.08   &  -                 & 139.07  $\pm$ 9.35  & 4258.57  $\pm$ 737.15 &  -                  & 1945.71 $\pm$ 48.79  \\
        % NETS-AM & 3.89 $\pm$ 0.22 & -                 &  0.041 $\pm$ 0.001 & 435.79 $\pm$ 96.17  & -                     &   0.0396$\pm$ 0.001 & 407.83  $\pm$ 69.64  \\
        SCLD    & 0.44 $\pm$ 0.06   &  -                 & 134.23  $\pm$ 8.39  & 3787.73  $\pm$ 249.75 &  -                  & 656.10  $\pm$ 88.97  \\
        \midrule
        \textbf{NAAS} (Ours)   & $\mathbf{0.10}$ $\pm$0.0075  & $\mathbf{0.076}$ $\pm$ 0.004 & 132.30 $\pm$ 5.87 &  $\mathbf{496.48}$ $\pm$ 27.08  & $\mathbf{0.113}$ $\pm$ 0.070 & $\mathbf{394.55}$ $\pm$ 29.35 \\
        \bottomrule
        \end{tabular}}
\end{table}
\begin{figure*}[t]  
    \centering
    \begin{subfigure}[b]{.45\textwidth}
        \includegraphics[width=1\textwidth]{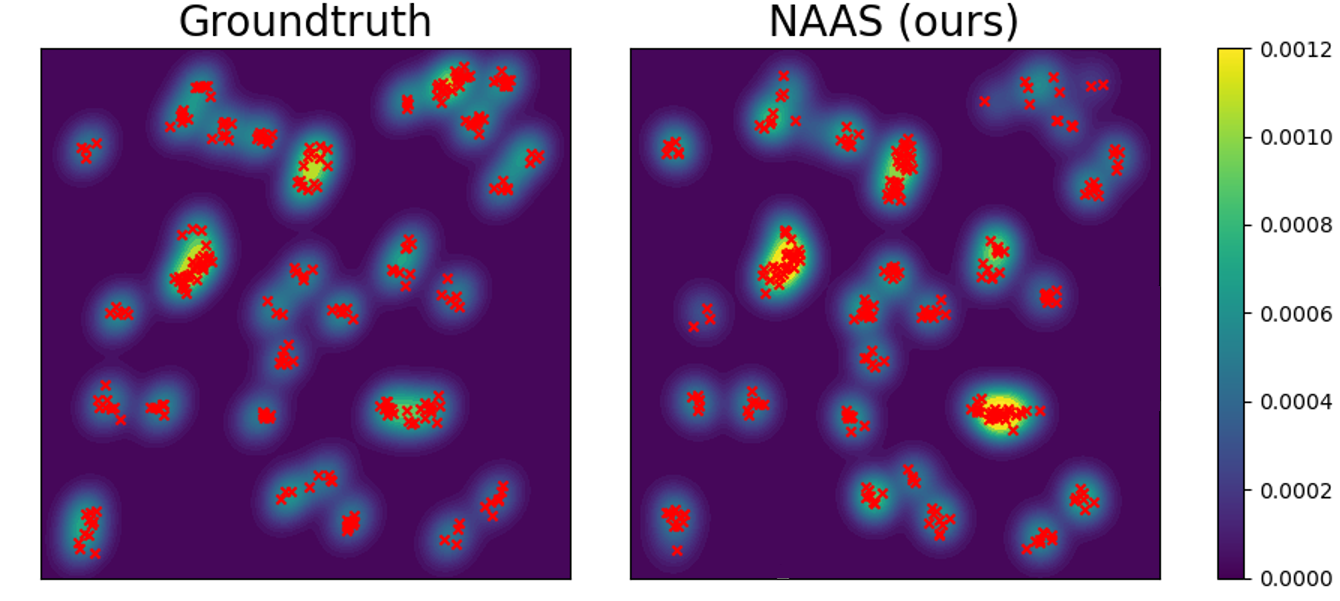}
        % \vspace{-6mm}
        \caption{GMM40 (50d)}
    \end{subfigure}
    \begin{subfigure}[b]{.45\textwidth}
        \includegraphics[width=1\textwidth]{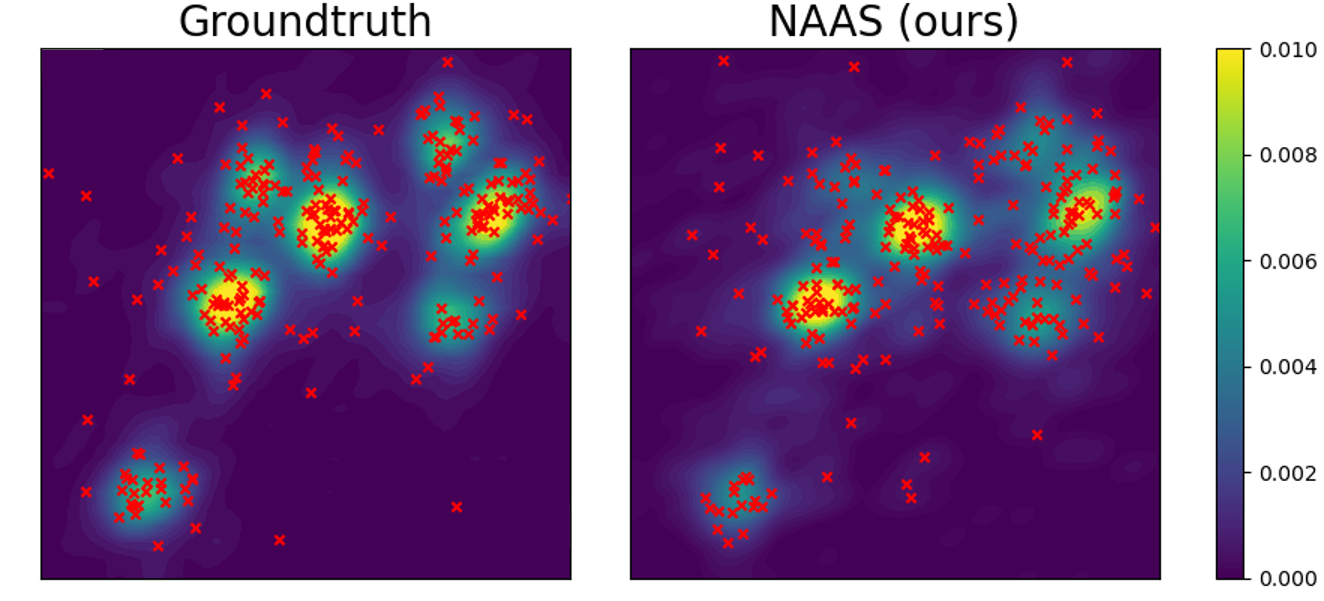}
        % \vspace{-6mm}
        \caption{MoS (50d)}
    \end{subfigure}
    % \vspace{-1mm}
    \caption{\textbf{Qualitative Comparison on GMM40 (50d) and MoS (50d).} For each task, we visualize the kernel density estimate (KDE) of the target distribution alongside the generated samples (shown as red dots), projected onto the first two principal axes. The samples from NAAS closely follow the structure and support of the ground-truth, demonstrating accurate mode coverage and high sample fidelity in high-dimensional settings.}
    \label{fig:sample}
    \vspace{-4mm}
\end{figure*}

\section{Experiments}
\label{sec:exp}

In this section, we evaluate NAAS across a diverse set of sampling benchmarks. Further implementation details are provided in Appendix \ref{appen:implementation-details}.
\begin{itemize}[leftmargin=*]
    \item Our primary focus on challenging synthetic distributions, including the 32-mode many-well (MW54, 5d) distribution, the 10d Funnel benchmark, a 40-mode Gaussian mixture model (GMM40) in 50d, and a 50d Student mixture model (MoS). We assess the performance using the maximum mean discrepancy (MMD) following \citet{akhound2024iterated} and the Sinkhorn distance (Sinkhorn) with a small entropic regularization ($10^{-3}$) following \citet{chen2024sequential}. 
    \item We demonstrate the practical applicability of NAAS by extending our evaluation to the generation of Alanine Dipeptide (AD) molecular structures. This molecule consists of 22 atoms in 3D. Following the implementation of \citet{midgleyflow, wu2020stochastic}, we use the OpenMM library \citep{eastman2017openmm} and represent the molecular structure using internal coordinates, resulting in a 60-dimensional state space. We evaluate KL divergence ($D_{\text{KL}}$) between 2000 generated and reference samples for the backbone angles $\phi, \psi$ and the methyl rotation angles $\gamma_1, \gamma_2, \gamma_3$, which are referred to as \textit{torsions}. Moreover, we report Wasserstein distance between energy values of generated and reference samples ($E(\cdot)W_2$).
\end{itemize}
\paragraph{Main Results}
As shown in Table~\ref{tab:toy}, our proposed method, NAAS, consistently outperforms existing baselines across the majority of synthetic sampling benchmarks. On the MW54 and GMM40 tasks, NAAS improves over the second-best method by more than 75\%, demonstrating exceptionally strong performance. Furthermore, on both the Funnel (MMD) and MoS benchmarks, NAAS achieves substantial improvements over competing methods. These results collectively highlight the effectiveness of NAAS as a high-quality sampler across diverse and challenging targets.

Figure~\ref{fig:sample} presents qualitative results for the GMM40 and MoS benchmarks, showing side-by-side visualizations of samples generated by NAAS and the corresponding ground truth.  The generated samples accurately capture the structure and support of the true distributions. All modes are well-represented without collapse or over-concentration, indicating high sample diversity and fidelity. 

Figure~\ref{fig:train-dynamics} illustrates the training dynamics of NAAS, with both MMD and Sinkhorn distance plotted over training iterations. In particular, the MoS experiment Figures~\ref{fig:mmd-mos} and \ref{fig:sinkhorn-mos} highlight that our initial dynamics are remarkably strong; our model begins training with an MMD around 0.22 and a Sinkhorn distance near 2600. These values are already competitive with, or even superior to, the final performance achieved by several established baselines such as SMC, SMC-ESS \citep{buchholz2021adaptive}, PIS \citep{zhang2022path}, and DDS \citep{vargas2023denoising}. All comparison benchmarks, including SMC, SMC-ESS, and CMCD \citep{vargas2023transport} are taken from SCLD \citep{chen2024sequential}. This favorable starting point is a direct consequence of the annealed reference dynamics employed by our method, which provides a meaningful initialization that captures coarse features of the target distribution. NAAS then further improves the sampler by learning the control. Ultimately, NAAS surpasses the second-best method, SCLD \citep{chen2024sequential}, by more than 
30\% in both metrics, demonstrating that its performance gains are not merely due to favorable initialization, but also to its ability to effectively learn corrections to the reference dynamics. This highlights the strength of NAAS in combining informative priors with principled learning for efficient and high-fidelity sampling.

\begin{figure}[t]
    \begin{subfigure}[b]{.32\textwidth}
        \includegraphics[width=1\textwidth]{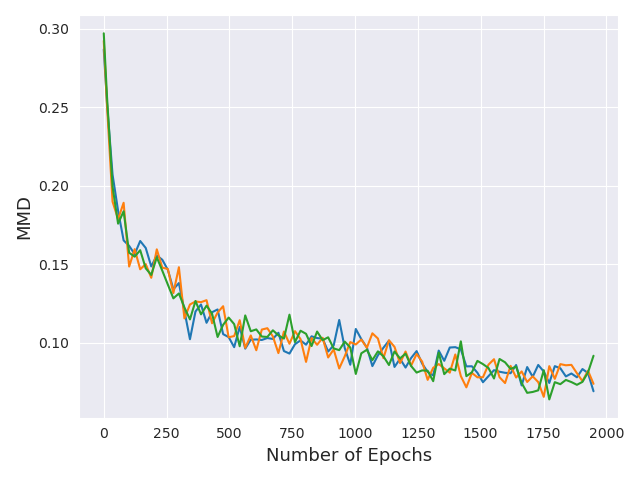}
        \vspace{-6mm}
        \caption{Funnel (MMD)}
    \end{subfigure}
    \hfill
    \begin{subfigure}[b]{.32\textwidth}
        \includegraphics[width=1\textwidth]{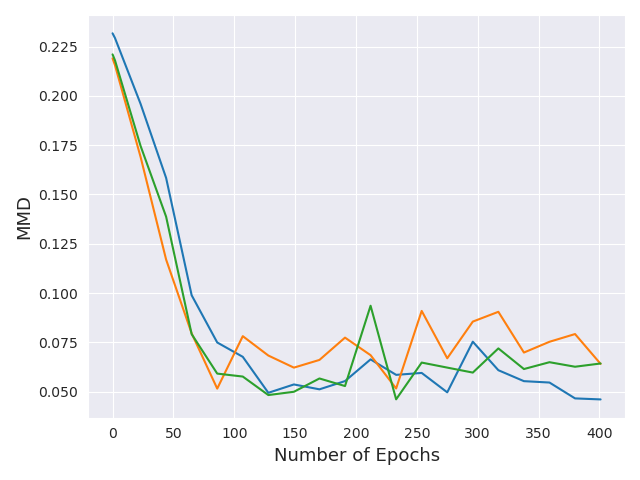}
        \vspace{-6mm}
        \caption{MoS (MMD)}
        \label{fig:mmd-mos}
    \end{subfigure}
    \hfill
    \begin{subfigure}[b]{.32\textwidth}
        \includegraphics[width=1\textwidth]{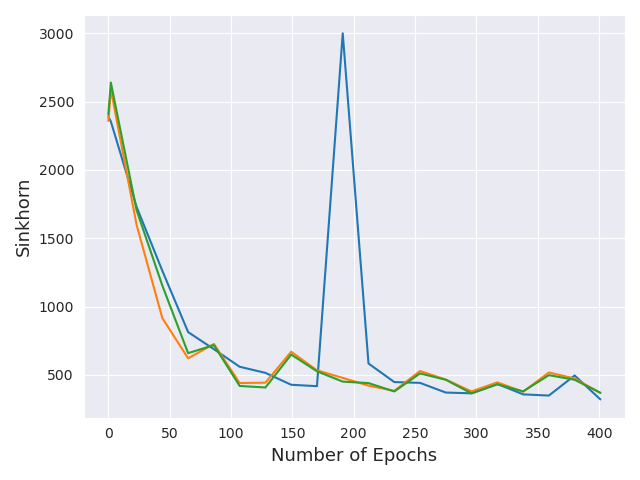}
        \vspace{-6mm}
        \caption{MoS (Sinkhorn)}
        \label{fig:sinkhorn-mos}
    \end{subfigure}
    \caption{Visualization of training dynamics over epochs. Each runs are presented in a different color.
    }
    \label{fig:train-dynamics}
    \vspace{-10pt}
\end{figure}

% \vspace{2pt}
\textbf{Alanine Dipeptide Generation}$\quad$
We compare our model to PIS \citep{zhang2022path} and DDS \citep{vargas2023denoising}, which can be viewed as a special case of our framework that excludes the annealed path measure. As shown in Table \ref{tab:ad}, our model achieves significantly better performance in terms of $\gamma_1$ and $E(\cdot)W_2$, while showing comparable results on the remaining metrics. It also achieves favorable values across all torsion angles while maintaining a similar energy distribution, indicating the absence of mode collapse. Furthermore, as illustrated by the qualitative results in Figure \ref{fig:torsion}, the torsion angle distributions produced by our model closely follow the trends of the reference density.

\begin{figure}[h]
    \begin{minipage}{0.5\textwidth}
        \captionof{table}{Results for AD generation. We report KL divergence for torsions ($D_{\text{KL}}$) and Wasserstein 2-distance for energies $E(\cdot ) W_2$.}
        \centering
        \scalebox{0.65}{
        \begin{tabular}{ccccccc}
            \toprule
            Method & $\phi$ & $\psi$ & $\gamma_1$ & $\gamma_2$ & $\gamma_3$ & $E(\cdot )W_2$ \\
            \midrule
            PIS & 0.597 & 0.952 & 0.453 & 0.498 & 7.038 & 5.918 \\
            DDS & 0.493 & 0.154 & 4.095 & 0.111 & 0.150 & 5.467 \\
            \textbf{NAAS} (Ours) & 0.260 & 0.236 & 0.272 & 0.132 & 0.156 & 1.076 \\ 
            \bottomrule
        \end{tabular}}
        \label{tab:ad}
    \end{minipage}
    \hfill
    \begin{minipage}{0.48\textwidth}
        \includegraphics[width=1\textwidth]{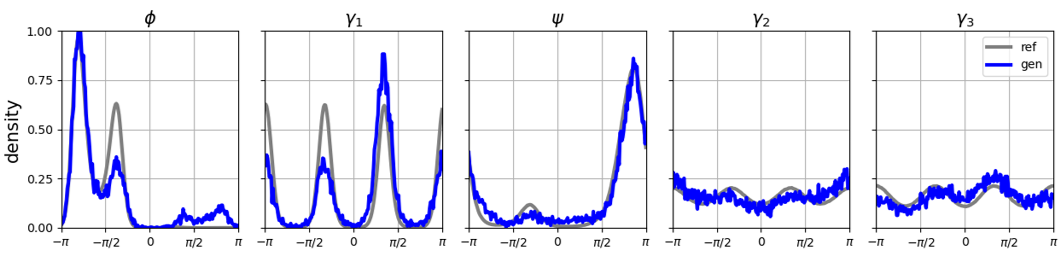}
        \vspace{-6mm}
        \caption{Comparison of five torsions between generated and reference samples}
        \label{fig:torsion}
    \end{minipage}
\end{figure}

\section{Conclusion}\label{sec:conclusion}
We proposed NAAS, a novel and unbiased sampling algorithm grounded in SOC theory under annealed reference dynamics. Our key contribution is on formulating the SOC problem that results unbiased diffusion sampler tailored to have annealed reference process. 
To construct an unbiased estimator, we propose a joint SOC problem involving two distinct control functions, which are optimized in an alternating manner using an adjoint matching. 
We validated NAAS on a diverse set of synthetic benchmarks, demonstrating state-of-the-art performance across all tasks. Additionally, we extended our framework to molecular structure generation with alanine dipeptide, suggesting that NAAS can scale to more complex and structured domains. 
Despite these encouraging results, it remains to be tested its effectiveness on
larger-scale and real-world scenarios, potentially with more expressive annealing schedules.
Moreover, a deeper investigation into the trade-off between the two optimization stages—examining how the strength of the annealing term influences the behavior of NAAS—remains an important direction for future work.
As a limitation, NAAS requires computing the Hessian of the given energy function during sample generation, which introduces computational overhead. Moreover, NAAS requires two networks to train due to its alternating scheme.
We do not see any negative societal consequences of our work that should be highlighted here.

\begin{ack}
JC and YC acknowledge support from NSF Grants ECCS-1942523, DMS-2206576, and CMMI-2450378. MT is thankful for partial supports by NSF Grants DMS-1847802, DMS-2513699, DOE Grants NA0004261, SC0026274, and Richard Duke Fellowship. 
\end{ack}
\bibliographystyle{assets/plainnat} % text
\bibliography{paper}

@string{nips="Advances in Neural Information Processing Systems (NeurIPS)"}

@string{iclr="International Conference on Learning Representations (ICLR)"}

@string{arxiv="arXiv Preprint"}

@article{guo2024provable,
  title={Provable benefit of annealed langevin monte carlo for non-log-concave sampling},
  author={Guo, Wei and Tao, Molei and Chen, Yongxin},
  journal={arXiv preprint arXiv:2407.16936},
  year={2024}
}

@article{guo2025complexity,
  title={Complexity Analysis of Normalizing Constant Estimation: from Jarzynski Equality to Annealed Importance Sampling and beyond},
  author={Guo, Wei and Tao, Molei and Chen, Yongxin},
  journal={arXiv preprint arXiv:2502.04575},
  year={2025}
}

@inproceedings{shi2023diffusion,
 title={{Diffusion Schr{\"o}dinger bridge matching}},
  author={Shi, Yuyang and De Bortoli, Valentin and Campbell, Andrew and Doucet, Arnaud},
  booktitle=nips,
  year={2023}
}

@article{kappen2005path,
 title={{Path integrals and symmetry breaking for optimal control theory}},
  author={Kappen, Hilbert J},
  journal={Journal of Statistical Mechanics: Theory and Experiment},
  volume={2005},
  number={11},
  pages={P11011},
  year={2005},
  publisher={IOP Publishing}
}

@article{leonard2014reciprocal,
 title={{Reciprocal processes. A measure-theoretical point of view}},
  author={L{\'e}onard, Christian and R{\oe}lly, Sylvie and Zambrini, Jean-Claude},
  journal={Probability Surveys},
  year={2014},
  publisher={Institute of Mathematical Statistics and Bernoulli Society}
}

@inproceedings{todorov2007linearly,
 title={{Linearly-solvable Markov decision problems}},
  author={Todorov, Emanuel},
  booktitle=nips,
  year={2007}
}

@inproceedings{liu2022deep,
 title={{Deep generalized Schr{\"o}dinger bridge}},
  author={Liu, Guan-Horng and Chen, Tianrong and So, Oswin and Theodorou, Evangelos A},
  booktitle=nips,
  year={2022}
}

@inproceedings{song2021score,
 title={{Score-based generative modeling through stochastic differential equations}},
  author={Song, Yang and Sohl-Dickstein, Jascha and Kingma, Diederik P and Kumar, Abhishek and Ermon, Stefano and Poole, Ben},
  booktitle=iclr,
  year={2021},
}

@incollection{risken1996fokker,
 title={{Fokker-Planck equation}},
  author={Risken, Hannes},
  booktitle={The Fokker-Planck Equation},
  pages={63--95},
  year={1996},
  publisher={Springer}
}

@inproceedings{karras2022elucidating,
  title={{Elucidating the design space of diffusion-based generative models}},
  author={Karras, Tero and Aittala, Miika and Aila, Timo and Laine, Samuli},
  booktitle=nips,
  year={2022}
}

@article{dinh2014nice,
  title={Nice: Non-linear independent components estimation},
  author={Dinh, Laurent and Krueger, David and Bengio, Yoshua},
  journal={arXiv preprint arXiv:1410.8516},
  year={2014}
}

@inproceedings{rezende2015variational,
  title={Variational inference with normalizing flows},
  author={Rezende, Danilo and Mohamed, Shakir},
  booktitle={International conference on machine learning},
  pages={1530--1538},
  year={2015},
  organization={PMLR}
}

@inproceedings{zhang2022path,
title={Path Integral Sampler: A Stochastic Control Approach For Sampling},
author={Qinsheng Zhang and Yongxin Chen},
booktitle=iclr,
year={2022},
}

@article{hopf1950partial,
  title={The partial differential equation ut+ uux= $\mu$xx},
  author={Hopf, Eberhard},
  journal={Communications on Pure and Applied mathematics},
  volume={3},
  number={3},
  pages={201--230},
  year={1950},
  publisher={Wiley Online Library}
}

@article{chen2021stochastic,
  title={Stochastic Control Liaisons: Richard Sinkhorn Meets Gaspard Monge on a Schr{\"o}dinger Bridge},
  author={Chen, Yongxin and Georgiou, Tryphon T and Pavon, Michele},
  journal={SIAM Review},
  volume={63},
  number={2},
  pages={249--313},
  year={2021},
  publisher={SIAM}
}

@article{dai1991stochastic,
  title={A stochastic control approach to reciprocal diffusion processes},
  author={Dai Pra, Paolo},
  journal={Applied mathematics and Optimization},
  volume={23},
  number={1},
  pages={313--329},
  year={1991},
  publisher={Springer}
}

@incollection{pavon1991free,
  title={On free energy, stochastic control, and Schr{\"o}dinger processes},
  author={Pavon, Michele and Wakolbinger, Anton},
  booktitle={Modeling, Estimation and Control of Systems with Uncertainty},
  pages={334--348},
  year={1991},
  publisher={Springer}
}

@article{kingma2014adam,
  title={Adam: A method for stochastic optimization},
  author={Kingma, Diederik P and Ba, Jimmy},
  journal={arXiv preprint arXiv:1412.6980},
  year={2014}
}

@techreport{bellman1954theory,
  title={The theory of dynamic programming},
  author={Bellman, Richard},
  year={1954},
  institution={Rand corp santa monica ca}
}

@article{AS,
  title={Adjoint Sampling: Highly Scalable Diffusion Samplers via Adjoint Matching},
  author={Havens, Aaron and Miller, Benjamin Kurt and Yan, Bing and Domingo-Enrich, Carles and Sriram, Anuroop and Wood, Brandon and Levine, Daniel and Hu, Bin and Amos, Brandon and Karrer, Brian and others},
  journal={arXiv preprint arXiv:2504.11713},
  year={2025}
}

@book{tuckerman2023statistical,
  title={Statistical mechanics: theory and molecular simulation},
  author={Tuckerman, Mark E},
  year={2023},
  publisher={Oxford university press}
}

@book{box2011bayesian,
  title={Bayesian inference in statistical analysis},
  author={Box, George EP and Tiao, George C},
  year={2011},
  publisher={John Wiley \& Sons}
}

@book{binder1992monte,
  title={Monte Carlo simulation in statistical physics},
  author={Binder, Kurt and Heermann, Dieter W and Binder, K},
  volume={8},
  year={1992},
  publisher={Springer}
}

@article{albergo2024nets,
  title={NETS: A Non-Equilibrium Transport Sampler},
  author={Albergo, Michael S and Vanden-Eijnden, Eric},
  journal={arXiv preprint arXiv:2410.02711},
  year={2024}
}

@article{phillips2024particle,
  title={Particle Denoising Diffusion Sampler},
  author={Phillips, Angus and Dau, Hai-Dang and Hutchinson, Michael John and De Bortoli, Valentin and Deligiannidis, George and Doucet, Arnaud},
  journal={arXiv preprint arXiv:2402.06320},
  year={2024}
}

@inproceedings{vargas2023denoising,
  title={Denoising diffusion samplers},
  author={Vargas, Francisco and Grathwohl, Will and Doucet, Arnaud},
  booktitle=iclr,
  year={2023}
}

@inproceedings{akhound2024iterated,
  title={Iterated denoising energy matching for sampling from boltzmann densities},
  author={Akhound-Sadegh, Tara and Rector-Brooks, Jarrid and Joey Bose, Avishek and Mittal, Sarthak and Lemos, Pablo and Liu, Cheng-Hao and Sendera, Marcin and Ravanbakhsh, Siamak and Gidel, Gauthier and Bengio, Yoshua and others},
  booktitle={Proceedings of the 41st International Conference on Machine Learning},
  pages={760--786},
  year={2024}
}

@article{chen2024sequential,
  title={Sequential controlled langevin diffusions},
  author={Chen, Junhua and Richter, Lorenz and Berner, Julius and Blessing, Denis and Neumann, Gerhard and Anandkumar, Anima},
  journal={arXiv preprint arXiv:2412.07081},
  year={2024}
}

@inproceedings{richterimproved,
  title={Improved sampling via learned diffusions},
  author={Richter, Lorenz and Berner, Julius},
  booktitle={The Twelfth International Conference on Learning Representations},
  year={2024}
}

@inproceedings{midgleyflow,
  title={Flow Annealed Importance Sampling Bootstrap},
  author={Midgley, Laurence Illing and Stimper, Vincent and Simm, Gregor NC and Sch{\"o}lkopf, Bernhard and Hern{\'a}ndez-Lobato, Jos{\'e} Miguel},
  booktitle={The Eleventh International Conference on Learning Representations},
  year={2023}
}

@inproceedings{vargas2023transport,
  title={Transport meets variational inference: Controlled monte carlo diffusions},
  author={Vargas, Francisco and Padhy, Shreyas and Blessing, Denis and N{\"u}sken, Nikolas},
  booktitle=iclr,
  year={2024}
}

@inproceedings{noble2024learned,
  title={Learned Reference-based Diffusion Sampling for multi-modal distributions},
  author={Noble, Maxence and Grenioux, Louis and Gabri{\'e}, Marylou and Durmus, Alain Oliviero},
  booktitle=iclr,
  year={2025}
}

@inproceedings{domingo2024adjoint,
  title={Adjoint matching: Fine-tuning flow and diffusion generative models with memoryless stochastic optimal control},
  author={Domingo-Enrich, Carles and Drozdzal, Michal and Karrer, Brian and Chen, Ricky TQ},
  booktitle=iclr,
  year={2025}
}

@article{metropolis1953equation,
  title={Equation of state calculations by fast computing machines},
  author={Metropolis, Nicholas and Rosenbluth, Arianna W and Rosenbluth, Marshall N and Teller, Augusta H and Teller, Edward},
  journal={The journal of chemical physics},
  volume={21},
  number={6},
  pages={1087--1092},
  year={1953},
  publisher={American Institute of Physics}
}

@article{del2006sequential,
  title={Sequential monte carlo samplers},
  author={Del Moral, Pierre and Doucet, Arnaud and Jasra, Ajay},
  journal={Journal of the Royal Statistical Society Series B: Statistical Methodology},
  volume={68},
  number={3},
  pages={411--436},
  year={2006},
  publisher={Oxford University Press}
}

@article{neal2001annealed,
  title={Annealed importance sampling},
  author={Neal, Radford M},
  journal={Statistics and computing},
  volume={11},
  pages={125--139},
  year={2001},
  publisher={Springer}
}

@article{chopin2002sequential,
  title={A sequential particle filter method for static models},
  author={Chopin, Nicolas},
  journal={Biometrika},
  volume={89},
  number={3},
  pages={539--552},
  year={2002},
  publisher={Oxford University Press}
}

@article{wu2020stochastic,
  title={Stochastic normalizing flows},
  author={Wu, Hao and K{\"o}hler, Jonas and No{\'e}, Frank},
  journal={Advances in neural information processing systems},
  volume={33},
  pages={5933--5944},
  year={2020}
}

@article{geffner2021mcmc,
  title={MCMC variational inference via uncorrected Hamiltonian annealing},
  author={Geffner, Tomas and Domke, Justin},
  journal={Advances in Neural Information Processing Systems},
  volume={34},
  pages={639--651},
  year={2021}
}

@article{heng2020controlled,
  title={Controlled sequential monte carlo},
  author={Heng, Jeremy and Bishop, Adrian N and Deligiannidis, George and Doucet, Arnaud},
  journal={The Annals of Statistics},
  volume={48},
  number={5},
  pages={2904--2929},
  year={2020},
  publisher={JSTOR}
}

@inproceedings{arbel2021annealed,
  title={Annealed flow transport monte carlo},
  author={Arbel, Michael and Matthews, Alex and Doucet, Arnaud},
  booktitle={International Conference on Machine Learning},
  pages={318--330},
  year={2021},
  organization={PMLR}
}

@inproceedings{matthews2022continual,
  title={Continual repeated annealed flow transport Monte Carlo},
  author={Matthews, Alex and Arbel, Michael and Rezende, Danilo Jimenez and Doucet, Arnaud},
  booktitle={International Conference on Machine Learning},
  pages={15196--15219},
  year={2022},
  organization={PMLR}
}

@article{buchholz2021adaptive,
  title={Adaptive tuning of hamiltonian monte carlo within sequential monte carlo},
  author={Buchholz, Alexander and Chopin, Nicolas and Jacob, Pierre E},
  journal={Bayesian Analysis},
  volume={16},
  number={3},
  pages={745--771},
  year={2021},
  publisher={International Society for Bayesian Analysis}
}

@article{weininger1988smiles,
  title={SMILES, a chemical language and information system. 1. Introduction to methodology and encoding rules},
  author={Weininger, David},
  journal={Journal of chemical information and computer sciences},
  volume={28},
  number={1},
  pages={31--36},
  year={1988},
  publisher={ACS Publications}
}

@article{eastman2017openmm,
  title={OpenMM 7: Rapid development of high performance algorithms for molecular dynamics},
  author={Eastman, Peter and Swails, Jason and Chodera, John D and McGibbon, Robert T and Zhao, Yutong and Beauchamp, Kyle A and Wang, Lee-Ping and Simmonett, Andrew C and Harrigan, Matthew P and Stern, Chaya D and others},
  journal={PLoS computational biology},
  volume={13},
  number={7},
  pages={e1005659},
  year={2017},
  publisher={Public Library of Science San Francisco, CA USA}
}

@article{jumper2021highly,
  title={Highly accurate protein structure prediction with AlphaFold},
  author={Jumper, John and Evans, Richard and Pritzel, Alexander and Green, Tim and Figurnov, Michael and Ronneberger, Olaf and Tunyasuvunakool, Kathryn and Bates, Russ and {\v{Z}}{\'\i}dek, Augustin and Potapenko, Anna and others},
  journal={nature},
  volume={596},
  number={7873},
  pages={583--589},
  year={2021},
  publisher={Nature Publishing Group}
}

@inproceedings{bosese,
  title={SE (3)-Stochastic Flow Matching for Protein Backbone Generation},
  author={Bose, Joey and Akhound-Sadegh, Tara and Huguet, Guillaume and FATRAS, Kilian and Rector-Brooks, Jarrid and Liu, Cheng-Hao and Nica, Andrei Cristian and Korablyov, Maksym and Bronstein, Michael M and Tong, Alexander},
  booktitle={The Twelfth International Conference on Learning Representations},
  year={2024}
}

@article{neal2003slice,
  title={Slice sampling},
  author={Neal, Radford M},
  journal={The annals of statistics},
  volume={31},
  number={3},
  pages={705--767},
  year={2003},
  publisher={Institute of Mathematical Statistics}
}

@inproceedings{blessing2024beyond,
  title={Beyond ELBOs: a large-scale evaluation of variational methods for sampling},
  author={Blessing, Denis and Jia, Xiaogang and Esslinger, Johannes and Vargas, Francisco and Neumann, Gerhard},
  booktitle={Proceedings of the 41st International Conference on Machine Learning},
  pages={4205--4229},
  year={2024}
}

@article{nusken2021solving,
  title={Solving high-dimensional Hamilton--Jacobi--Bellman PDEs using neural networks: perspectives from the theory of controlled diffusions and measures on path space},
  author={N{\"u}sken, Nikolas and Richter, Lorenz},
  journal={Partial differential equations and applications},
  volume={2},
  number={4},
  pages={48},
  year={2021},
  publisher={Springer}
}

@inproceedings{liugeneralized,
  title={Generalized Schr{\"o}dinger Bridge Matching},
  author={Liu, Guan-Horng and Lipman, Yaron and Nickel, Maximilian and Karrer, Brian and Theodorou, Evangelos and Chen, Ricky TQ},
  booktitle={The Twelfth International Conference on Learning Representations},
  year={2024}
}

@article{pot,
  author  = {R{\'e}mi Flamary and Nicolas Courty and Alexandre Gramfort and Mokhtar Z. Alaya and Aur{\'e}lie Boisbunon and Stanislas Chambon and Laetitia Chapel and Adrien Corenflos and Kilian Fatras and Nemo Fournier and L{\'e}o Gautheron and Nathalie T.H. Gayraud and Hicham Janati and Alain Rakotomamonjy and Ievgen Redko and Antoine Rolet and Antony Schutz and Vivien Seguy and Danica J. Sutherland and Romain Tavenard and Alexander Tong and Titouan Vayer},
  title   = {POT: Python Optimal Transport},
  journal = {Journal of Machine Learning Research},
  year    = {2021},
  volume  = {22},
  number  = {78},
  pages   = {1-8},
  url     = {http://jmlr.org/papers/v22/20-451.html}
}

@book{le2016brownian,
  title={Brownian motion, martingales, and stochastic calculus},
  author={Le Gall, Jean-Fran{\c{c}}ois},
  year={2016},
  publisher={Springer}
}

@article{jarzynski1997nonequilibrium,
  title={Nonequilibrium equality for free energy differences},
  author={Jarzynski, Christopher},
  journal={Physical Review Letters},
  volume={78},
  number={14},
  pages={2690},
  year={1997},
  publisher={APS}
}

@book{stoltz2010free,
  title={Free energy computations: A mathematical perspective},
  author={Stoltz, Gabriel and Rousset, Mathias and others},
  year={2010},
  publisher={World Scientific}
}

@article{behjoo2025harmonic,
  title={Harmonic path integral diffusion},
  author={Behjoo, Hamidreza and Chertkov, Michael},
  journal={IEEE Access},
  year={2025},
  publisher={IEEE}
}

%%%%%%%%%%%%%%%%%%%%%%%%%%%%%%%%%%%%%%%%%%%%%%%%%%%%%%%%%%%%
% \clearpage
% b
\clearpage
\appendix
% \input{checklist}
% \clearpage

\section{Proofs and Additional Theorems}
\label{appen:proofs}
\subsection{Proof of Theorem \ref{thm:soc-naas}}
\begin{proof}
Let $V:[0,1]\times \mathcal{X}\rightarrow \mathbb{R}$ be a value function defined as \cref{eq:value}. Then, the necessary condition, namely Hamilton-Jacobi-Bellman (HJB) equation \citep{bellman1954theory}, is written as follows:
\begin{align}\label{eq:hjb-appen1}
    0 = \partial_t V_t (x) - \frac{\sigma^2_t}{2} \nabla U_t (x) \cdot \nabla V_t (x) + \frac{\sigma^2_t}{2} \Delta V_t (x) - \frac{\sigma^2_t}{2} \Vert \nabla V_t (x) \Vert^2_2 + \partial_t U_t (x), \ \ V_1(x) = 0,
\end{align}
where $u^\star_t (x) = - \sigma_t \nabla V_t (x)$.
Moreover, the corresponding Fokker-Planck equation (FPE) \citep{risken1996fokker} can be written as follows:
\begin{align}\label{eq:fpk}
    \partial_t p^\star_t (x) = - \nabla \cdot \left( \left(-\frac{\sigma^2_t}{2}\nabla U_t (x) - \sigma^2_t \nabla V_t (x) \right) p^\star_t (x) \right) + \nabla^2 \cdot \left( \frac{\sigma^2_t}{2} p^\star_t (x) \right)
\end{align}
Now, let $\varphi_t (x) = e^{-V_t (x)}$ and $\hat\varphi_t (x) = p^\star_t (x) e^{V_t (x)}$. Then,
\begin{align}\label{eq:varphi_appen1}
    \begin{split}
        \partial_t \varphi_t (x) &= - \partial_t V_t (x) e^{-V_t (x)}, \\
        &= \left( - \frac{\sigma^2_t}{2} \nabla U_t (x) \cdot \nabla V_t (x) + \frac{\sigma^2_t}{2} \Delta V_t (x) - \frac{\sigma^2_t}{2} \Vert \nabla V_t (x) \Vert^2_2 + \partial_t U_t (x) \right) e^{-V_t (x)}, \\
        &= \frac{\sigma^2_t}{2} \nabla U_t (x) \cdot \nabla \varphi_t (x) - \frac{\sigma^2_t}{2} \Delta \varphi_t(x) + \partial_t U_t (x) \varphi_t(x).
    \end{split}    
\end{align}
Following the eq. (52) in \cite{liu2022deep}, we can derive
\begin{align*}
    \partial_t \hat\varphi_t &= \exp(V_t) \left( \frac{\partial p^\star_t }{\partial t} + p^\star_t \frac{\partial V_t}{\partial t} \right) \\
    &= \exp(V_t) \left( \nabla\left(\frac{\sigma^2_t}{2}\nabla U_t p^\star_t \right) + \sigma^2_t \Delta V_t p^\star_t + \sigma^2_t \nabla V_t \cdot \nabla p_t + \frac{\sigma^2_t}{2} \Delta p^\star_t  \right)  \\
    &\ + \exp(V_t ) p^\star_t \left( \frac{\sigma^2_t}{2} \nabla U_t \cdot \nabla V_t  - \frac{\sigma^2_t}{2} \Delta V_t  + \frac{\sigma^2_t}{2} \Vert \nabla V_t  \Vert^2_2 - \partial_t U_t \right) \\
    &= \frac{\sigma^2_t}{2} \underbrace{\exp(V_t ) \left( 2 \Delta V_t p^\star_t + 2 \nabla V_t \cdot \nabla p_t + \Delta p^\star_t +  \Vert \nabla V_t  \Vert^2_2  - \Delta V_t \right)}_{= \Delta \hat\varphi_t} \\
    & \ + \frac{\sigma^2_t}{2}  \underbrace{\exp(V_t) \left( \nabla\left(\nabla U_t p^\star_t \right) + p^\star_t \nabla V_t \nabla U_t  \right)}_{\nabla\cdot \left(\nabla U_t  \hat\varphi_t \right)} - \partial_t U_t \underbrace{\exp{(V_t)}p^\star_t}_{=:\hat\varphi_t} \\
    &= \frac{\sigma^2_t}{2} \Delta \hat\varphi_t + \frac{\sigma^2_t}{2} \nabla \cdot \left(\nabla U_t  \hat\varphi_t \right) -\partial_t U_t \hat\varphi_t. 
\end{align*}
Therefore,
\begin{align*}
    \partial_t \hat\varphi_t (x) = \frac{\sigma^2_t}{2} \nabla \cdot (\nabla U_t \hat\varphi + \nabla \hat\varphi_t ) - \partial_t U_t \hat\varphi_t.
\end{align*}
Therefore, by combining these equation, we obtain a joint PDE, namely a Hopf-Cole transform \citep{hopf1950partial}, i.e.
\begin{align}\label{eq:hopf-cole-appen}
    \begin{cases}
    \partial_t \varphi_t (x) = \frac{\sigma^2_t}{2} \nabla U_t \cdot \nabla \varphi_t - \frac{\sigma^2_t}{2} \Delta \varphi_t + \partial_t U_t  \varphi_t , \quad \varphi_0 (x)\hat\varphi_0 (x) = \mu(x), \\
    \partial_t \hat\varphi_t (x) = \frac{\sigma^2_t}{2} \nabla \cdot (\nabla U_t \hat\varphi + \nabla \hat\varphi_t ) - \partial_t U_t \hat\varphi_t , \quad \ \varphi_1 (x)\hat\varphi_1 (x) = \nu(x).
    \end{cases}
\end{align} 
Notice that $p^\star_t (x) = \varphi_t (x) \hat\varphi_t (x)$ and $\varphi_1(x) = \text{constant}$ by the definition. Moreover, by simple calculation, one can derive that $\hat\varphi$ has the following closed-form expression:
\begin{equation*}
    \hat\varphi_t (x) = C e^{-U_t(x)},
\end{equation*}
up to some multiplicative constant $C$. Combining these equations we obtain
\begin{equation*}
    p^\star_t  (x) = \varphi_t (x) \hat\varphi_t (x) \propto e^{-U_t(x)} \varphi_t (x) = \exp(-U_t(x)- V_t (x)).
\end{equation*}
Hence,
\begin{equation}\label{eq:marginal}
    p^\star_0  (x) \propto \exp(-U_0(x)- V_0 (x)), \ \ p^\star_1  (x) \propto \exp(-U_1(x)- V_1 (x)) = C \exp(-U_1(x)) \propto \nu(x).
\end{equation}
\end{proof}

\subsection{Proof of Corollary \ref{cor:dds}}
\begin{proof}
    Let $U_{t \in [0,1)}(x) := U_0(x) = \frac{\Vert x\Vert^2}{2\bar\sigma^2}$ and $\beta_t = \sigma^2_t / {\bar\sigma}^2 $. 
    Then,
    \begin{equation*}
        - \frac{\sigma^2_t}{2} \nabla U_t (x) = - \frac{\sigma^2_t}{2} \nabla U_0 (x) = - \frac{\beta_t}{2} x.
    \end{equation*}
    Thus, by substituting all these variables into \cref{eq:soc-naas}, the equation degenerates to  
    \begin{equation}\begin{split}\label{eq:soc-special-appen}
        &\min_{u} \mathbb{E}\br{\int_0^1 \tfrac{1}{2} \lVert u_t (X_t) \rVert^2 \dt + U_1(X_1) - U_0(X_1)}, \\
        \text{\normalfont s.t. } \rd X_t &= \br{ - \frac{\beta_t}{2} X_t + \bar\sigma \sqrt{\beta_t} u_t (X_t)} \dt + \bar\sigma \sqrt{\beta_t} \rd W_t, \quad X_0 \sim \mu.
    \end{split}\end{equation}
    By Feynman-Kac formula \citep{le2016brownian},
    \begin{align}
        \exp{\left(-V_0(X_0)\right)} &\ \ = \mathbb{E}_{X\sim p^{\text{base}}(X|X_0)} \br{\exp{\left(- (U_1 (X_1) - U_0 (X_1)) \right)}},  \\
        & \stackrel{\cref{eq:memoryless}}{=} \mathbb{E}_{X_1 \sim p^{\text{base}}_1(\cdot)} \br{\exp{\left(- (U_1 (X_1) - U_0 (X_1)) \right)}} = \text{constant}.
    \end{align}
    Therefore, by substituting $V_0 \equiv \text{constant}$ to \cref{eq:marginal}, we obtain
    \begin{equation}
        p^\star_0 (x) = C \exp(-U_0) = \mathcal{N}(0, {\bar\sigma}^2 I),
    \end{equation}
    for some constant $C$.
\end{proof}

Note that since 
\begin{equation}
    p^{\text{base}}(X_1 | X_0) = \mathcal{N}\left(X_1 ; e^{-\frac{1}{2} \int^1_0 \beta_t \rd t}X_0, {\bar\sigma}^2 \left(1 - e^{- \int^1_0 \beta_t \rd t}\right) I \right),
\end{equation}
the reference dynamics is (almost) memoryless when $\int^1_0 \beta_t dt \approx \infty$. 

\subsection{Proof of Lemma \ref{lem:estimate-prior}}
\begin{proof}
Since $\hat\varphi_0(x) \propto e^{-U_0(x)}$,
$$V_0 (X_0) = \log \hat\varphi_0 (X_0) - \log \mu (X_0) = \log \mathcal{N}(x; 0, {\hat\sigma}^2 I) - \log \mu (X_0) + \text{constant},$$ 
the SOC problem \eqref{eq:soc-prior} can be rewritten as follows:
    \begin{align} \label{eq:soc-prior2}
        \begin{split}
            \min_u \mathbb{E} \br{ \int^0_{-1} \frac{1}{2} \Vert u(X_t, t) \Vert^2 \rd t + \log \frac{\mathcal{N}(X_0; 0, {\bar\sigma}^2 I)}{\mu(X_0)} }, \\    
            \text{s.t.}\quad \rd X_t = \bar\sigma u(X_t, t) \rd t + \bar\sigma \rd W_t, \quad X_{-1} \sim \delta_0.
        \end{split}
    \end{align}
By setting the terminal target distribution as $\mu$, this optimization problem becomes identical to the formulation in \cref{eq:soc-pis}, which we refer to as PIS. Consequently, the terminal distribution induced by its optimal solution is guaranteed to match $\mu$.
\end{proof}

\subsection{Proof of Theorem \ref{thm:combined}}
\begin{proof}
    Let $V:[-1,1]\times \mathcal{X} \rightarrow \mathbb{R}$ be a value function defined on the given SOC problem \eqref{eq:soc-unbias-naas}. Then, the corresponding HJB equation can be written as follows:
    \begin{equation}\label{eq:hjb-appen2}
    0 = \partial_t V_t(x) + \frac{\bar\sigma^2}{2} \Delta V_t (x) - \frac{\bar\sigma^2}{2} \Vert \nabla V_t(x) \Vert^2 \quad \text{for } t< 0,       \quad  \cref{eq:hjb-appen1} \quad \text{for } t\geq 0.
    \end{equation}
    Let $\varphi_t (x) = e^{-V_t (x)}$ and $\hat\varphi_t(x) = p^\star_t (x) e^{V_t (x)}$ for $-1 \leq t \leq 1$. Then, similar to the proof of Theorem \ref{thm:soc-naas}, one can easily derive the following paired PDE:
    \begin{align}
        &\begin{cases}
        \partial_t \varphi(x,t) =  - \frac{\bar\sigma^2}{2} \Delta \varphi \\
        \partial_t \hat\varphi(x,t) = \frac{\bar\sigma^2}{2} \Delta \hat\varphi
        \end{cases}
        \quad \text{when } t < 0,\label{eq:hopf-cole1} \qquad  \varphi(x,1) = 1, \\
        &\begin{cases}
        \partial_t \varphi(x,t) = \frac{\sigma^2_t}{2} \nabla U_t \cdot \nabla \varphi - \frac{\sigma^2_t}{2} \Delta \varphi + \partial_t U_t  \varphi \\
        \partial_t \hat\varphi(x,t) = \frac{\sigma^2_t}{2} \nabla \cdot (\nabla U_{t} \hat\varphi + \nabla \hat\varphi) - \partial_t U_t \hat\varphi
        \end{cases}
        \quad \text{when } t\geq 0, \label{eq:hopf-cole2}
    \end{align}
    As shown in the proof of Theorem \ref{thm:soc-naas}, one can check that the solution (for some multiplicative constant $C$) to \cref{eq:hopf-cole2} is 
    \begin{align}\label{eq:terminal}
        \hat\varphi_t(x) = C e^{-U_{t}(x)}, \quad t\geq 0.
    \end{align}
    Therefore, $\hat\varphi_0(x) = C e^{-U_{0}(x)}$ for some multiplicative constant $C$.
    
    Now, over the interval $t \in [-1, 0]$, the function $\hat\varphi$ evolves according to the heat equation. Thus, we can explicitly write the solution for $\hat\varphi$ on $[-1,0]$ given the terminal condition \cref{eq:terminal} as follows:
    % Assuming the initial condition $\hat\varphi_{-1} = \delta_0$, we obtain the following explicit form for $t < 0$: 
    $$\hat\varphi_t (x) = C' e^{- \frac{\Vert x \Vert^2}{2\bar\sigma^2 (t+1)}} \quad \text{for } -1<t< 0, $$
    for some multiplicative constant $C'$.
    Note that $\lim_{t\rightarrow -1} \hat\varphi_t \propto \delta_0$.
    Combining these equation, $\hat\varphi$ admits a explicit solution, which can be written in a piecewise form over the full domain:
    \begin{align}
        \hat\varphi_t (x) = 
        \begin{cases} 
        C' e^{- \frac{\Vert x \Vert^2}{2\bar\sigma^2 (t+1)}} & \text{for } -1< t< 0, \\ 
        C e^{-U_{t}(x)} & \text{for }\quad \   0 \leq t \leq 1.
        \end{cases}
    \end{align}
    Thanks to the explicitly written $\hat\varphi_t$, we can easily derive the following conditions:
    \begin{equation}
        \varphi_{-1}\hat\varphi_{-1} = \delta_0, \quad \varphi_0 \hat\varphi_{0} \propto e^{-V_0-U_0} \propto \mu , \quad \varphi_{1}\hat\varphi_{1} \propto e^{-U_1} \propto \nu.
    \end{equation}
    Therefore, the solution to our stochastic optimal control (SOC) formulation in \eqref{eq:soc-unbias-naas} yields the desired unbiased sampler.
\end{proof}

\subsection{Detailed Explanation of Reciprocal Adjoint Matching}
By using the lean adjoint matching loss, we can update both $u^\theta$, $v^\theta$. Precisely, the original lean adjoint matching loss can be written as follows:
\begin{align}
    \mathcal{L}(u^\theta, v^\theta; {X}) &=\int^0_{-1} \Vert v^\theta_t(X_t) + \bar\sigma a(t;X)\Vert^2 dt + \int^1_0 \Vert u^\theta_t(X_t) + \sigma_t a(t;X)\Vert^2 dt, \\
    \text{where } \quad &X \sim p^{(\bar v, \bar u)}, \quad  (\bar{v},\bar{u}) = \text{stopgrad} ((v^\theta, u^\theta)), \quad  a(1; X) = 0, \\
    &\frac{\rd a(t;X)}{\rd t} \! = - \left[ a(t;X) \cdot \left(- \frac{\sigma^2_t}{2} \nabla^2 U_t (X_t) \right) + \partial_t \nabla U_t (X_t) \right], \quad t \geq 0, \\
    &\frac{\rd a(t;X)}{\rd t} \! = 0, \quad t < 0.
\end{align}
Thus, in the backward interval $t < 0$, the lean adjoint variable remains constant: $a(t; X) = a(0; X)=:a_0$ for all $t < 0$.
Furthermore, as discussed in \cref{eq:reciprocal}, the optimal backward control $v^\theta$ satisfies the reciprocal property. That is, under the optimal dynamics, the marginal distribution of $X_t$ given the endpoints $X_{-1} = 0$ and $X_0$ corresponds to a conditional Gaussian:
\begin{equation}
X_t \sim p^{\text{base}}_{t| -1, 0} (\cdot \mid X_{-1} := 0, X_0) \Rightarrow X_t \sim \mathcal{N}\left( (t+1) X_0,\ (t+1)(2 - t) \bar\sigma^2 I \right), 
\end{equation}
for all $t \in [-1, 0)$.
By leveraging this optimality condition, we obtain our reciprocal adjoint matching objective as \cref{eq:am-naas1}.

\section{Further Discussion on NAAS}

\subsection{Discussion on Scalability and Algorithmic Stability}

This section elaborates on three aspects of NAAS: (i) scalability with two learnable control functions, (ii) the rationale behind the alternating optimization scheme, and (iii) the complexity and stability of Algorithm~\ref{alg:naas}.

\paragraph{Scalability}
Although NAAS employs two control networks, the overall computational and memory costs remain comparable to existing neural control frameworks. Both control functions share the same lightweight architecture and are trained over disjoint annealing intervals, such that the doubled parameter count does not lead to doubled runtime or memory overhead. Empirically, both stages converge within similar iteration counts, and the per-iteration training time remains nearly unchanged since gradients are backpropagated through independent adjoint integrations. This modular design preserves scalability even in high-dimensional settings.

\paragraph{Alternating Optimization}
The optimization of NAAS is deliberately decoupled into two stages, each targeting a distinct subproblem:
\begin{itemize}
\item \textbf{Stage 1 ($t\in[-1,0]$)} learns a smooth, broad prior distribution that provides wide support over the target space. This stage functions as a coarse-scale initialization similar to multiscale optimization methods.
\item \textbf{Stage 2 ($t\in[0,1]$)} refines the control to transport samples from the coarse prior to the target distribution. As the annealed dynamics naturally bias trajectories toward high-density regions, this refinement primarily corrects residual mismatches, ensuring stability and data efficiency.
\end{itemize}

This alternating design mitigates interference between global and local objectives, leading to more stable convergence than a fully joint optimization. It also facilitates clearer diagnostics, as each stage’s role—broad coverage versus fine refinement—can be independently verified.

\paragraph{Empirical Comparison with Joint Optimization}
To verify the effect of our alternating algorithm, we compared it with a joint optimization scheme that trains a single control over the entire time horizon $t\in[-1,1]$. Evaluations were conducted on the MW54 and MoS benchmarks. As shown in Table~\ref{tab:opt-scheme}, our alternating optimization consistently outperforms the joint optimization across both tasks. These empirical findings align well with our theoretical motivation, demonstrating that decomposing the learning process into two focused stages yields better convergence and overall performance.

\begin{table}[h]
\caption{Comparison between joint and alternating optimization schemes on MW54 and MoS. We report Sinkhorn distance ($\downarrow$).}
\vspace{5pt}
\centering
\scalebox{0.9}{
\begin{tabular}{ccc}
     \toprule
     Scheme & MW54 & MoS \\
     \midrule
     Joint & 0.20 & 597.99  \\
     Alternating (Ours) & \textbf{0.10} & \textbf{394.55}  \\
     \bottomrule
\end{tabular}
}
\label{tab:opt-scheme}
\end{table}

\subsection{Difference between NAAS and NETS}

\paragraph{Non-equilibrium Sampling}
In a similar spirit to AIS \citep{neal2001annealed}, non-equilibrium samplers are methods that progressively guide samples over a finite horizon using a meaningful reference annealing path defined by the annealing energy $U_t$ between target and prior distributions, for instance, the same reference path \cref{eq:escort_sde} is considered for both NAAS and NETS \citep{albergo2024nets}. Such guidance from non-equilibrium—or reference—distribution $\exp(-U_t)$ indeed plays a key role in these methods.

\paragraph{Difference between NETS and NAAS}
NETS \citep{albergo2024nets} apply Jarzynski's equality \citep{jarzynski1997nonequilibrium} to match intermediate distribution at time $t$ to be proportional to $\exp(-U_t)$. So, in NETS, an additional component $u_t$ in drift term is introduced to eliminate the discrepancy between reference dynamics and the desired distribution $p_t \propto \exp(-U_t)$.

On the other hand, while NAAS also appends an additional term $u_t$ to the same reference path \cref{eq:escort_sde}, , its $u_t$ is derived within the stochastic optimal control framework, aiming to solve a specific objective formulated in \cref{eq:soc-naas}. This distinction alters the effect of $u_t$ in modifying the reference path \cref{eq:escort_sde} between the two methods, as the objective of NAAS \cref{eq:soc-naas} does not explicitly enforce $u_t$ to match the reference distribution $\exp(-U_t)$. 
Indeed, we prove in Section~\ref{sec:NAAS} that the proposed by NAAS results in a controlled SDE whose instantaneous distribution $p_t$ satisfies $p_t \propto \exp(-U_t - V_t)$, where $V_1 = 0$. That is, the instantaneous distribution of NAAS is highly correlated with the reference distribution $\exp(-U_t)$, but is biased by an additional term $\exp(-V_t)$, which vanishes at the terminal time $t=1$.
Furthermore, we highlight that since $V_1=0$, $p_1 \propto \exp(-U_1-V_1) = \exp(-U_1) \propto \nu$. Hence, the controlled SDEs proposed by NAAS do converge to the desired target distribution $\nu$ at terminal time $t=1$.

\subsection{Connection between NAAS and PDDS}

\paragraph{Particle Denoising Diffusion Sampler (PDDS)} \citep{phillips2024particle} considers a stochastic process between Normal distribution $\mathcal{N}$ and the target $\pi = \frac{\gamma}{Z}$
\begin{align}\label{eq:pdds}
    dY_t = \br{-\beta_{1-t} Y_t + 2 \beta_{1-t} \nabla \log g_{1-t}(Y_t)} dt + \sqrt{2\beta_{1-t}} dB_t,
    \quad Y_0 \sim \mathcal{N}(0, I), 
    \quad Y_1 \sim \pi = \tfrac{\gamma}{Z},
\end{align}
where the ``guidance'' $g_t$ is given by
\begin{align}
        &g_t(y) := \int g_0(x) p_0(x | x_t = y) dx, \qquad g_0 = \frac{\gamma}{\mathcal{N}}, \label{eq:pdds-gt} \\
        p(x_0 | x_t) &= \mathcal{N}(x_0; \sqrt{1-\lambda_t} x_t, \lambda_t I), 
        \qquad \lambda_t = 1 - \exp\pr{-2 \int_0^t \beta_\tau d \tau} \label{eq:pdds-condp}
\end{align}
and can be learned, $v(x,t; \theta) \approx \nabla \log g_{1-t}(x)$, via the NSM loss:\footnote{
    We substitute the identity, $\nabla\log \pi_s(x) + x = \nabla\log g_s(x)$, into the original NSM loss, $\mathbb{E} \norm{\nabla\log \pi_s^\theta(X_t) + X_t - \kappa_{s} \nabla \log g_0(X_0)}^2$.
}
\begin{align}\label{eq:pdds-nsm}
    \int_0^1 \mathbb{E} \norm{v(x,t; \theta) - \kappa_{1-t} \nabla \log g_0(X_0)}^2 dt, \qquad \kappa_t := \sqrt{1-\lambda_t} = \exp\pr{-\int_0^t \beta_\tau d\tau},
\end{align}
where the expectation is taken over the optimal distribution corrected via IWS.

\paragraph{SOC formulation for PDDS} In this paragraph, we introduce the SOC problem that corresponds to the optimal solution of PDDS. Moreover, we further discuss that the \textit{adjoint matching loss} for the SOC problem is equivalent to NSM loss.
\begin{proposition}
    \Cref{eq:pdds} is the optimal controlled process to the following SOC problem
    \begin{align} 
            &\min_{u} \mathbb{E}\br{ \int_0^1 \tfrac{1}{2} \lVert u_t (X_t) \rVert^2 dt + \log \tfrac{\mathcal{N}}{\gamma} (X_1)}, \label{eq:pdds-soc} \\
            \text{\normalfont s.t. } dX_t &= \br{-\beta_{1-t} X_t + \sqrt{2\beta_{1-t}} u_t (X_t)} dt + \sqrt{2\beta_{1-t}} dB_t, \quad X_0 \sim \mathcal{N}(0, I). \label{eq:pdds-soc-dyn}
    \end{align}
    That is, the optimal control to \eqref{eq:pdds-soc} is
    \begin{align}\label{eq:pdds-optu}
        u^\star_t (x) = \sqrt{2\beta_{1-t}} \nabla \log g_{1-t}(x).
    \end{align}
    Furthermore, the lean adjoint matching loss (w.r.t. the optimal distribution) for \eqref{eq:pdds-soc} is the same as the NSM loss \eqref{eq:pdds-nsm}.
\end{proposition}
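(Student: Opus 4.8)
The plan is to recognize \eqref{eq:pdds-soc} as a memoryless-type control problem in the sense of \eqref{eq:memoryless}, with vanishing running state cost $f\equiv 0$, reference drift $b_t(x) = -\beta_{1-t}x$, diffusion $\sigma_t = \sqrt{2\beta_{1-t}}$, and terminal cost $g(x) = \log\tfrac{\mathcal{N}}{\gamma}(x)$, and then to solve it exactly via the Hopf--Cole linearization used in the proof of \Cref{thm:soc-naas}. First I would write the HJB equation for the value function $V_t$ and set $\varphi_t = e^{-V_t}$; since $f\equiv 0$ the quadratic term cancels and $\varphi$ solves the linear backward Kolmogorov equation $\partial_t\varphi_t + b_t\cdot\nabla\varphi_t + \tfrac{\sigma_t^2}{2}\Delta\varphi_t = 0$ with $\varphi_1 = e^{-g}$. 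By Feynman--Kac this gives $\varphi_t(x) = \mathbb{E}\br{e^{-g(X_1)}\mid X_t = x}$ along the uncontrolled reference OU process, and since $e^{-g} = \gamma/\mathcal{N} = g_0$ this is an expectation of the guidance weight $g_0$.

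The key identification is the reference transition kernel. Solving the linear OU SDE from $X_t = x$ to time $1$ gives a Gaussian with mean $\kappa_{1-t}x$ and covariance $\lambda_{1-t}I$, where $\kappa_s = \exp(-\int_0^s\beta_\tau\,d\tau)$ and $\lambda_s = 1-\kappa_s^2$; this coincides exactly with the PDDS posterior $p_0(\cdot\mid x_{1-t} = x)$ in \eqref{eq:pdds-condp}. Hence $\varphi_t(x) = \int g_0(z)\,p_0(z\mid x_{1-t}=x)\,dz = g_{1-t}(x)$, so $V_t = -\log g_{1-t}$ and the optimal control $u^\star_t(x) = -\sigma_t\nabla V_t(x) = \sqrt{2\beta_{1-t}}\,\nabla\log g_{1-t}(x)$, which is \eqref{eq:pdds-optu}. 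Substituting $u^\star$ into \eqref{eq:pdds-soc-dyn} turns the drift into $-\beta_{1-t}x + 2\beta_{1-t}\nabla\log g_{1-t}(x)$, recovering \eqref{eq:pdds} and proving the first claim.

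For the second claim I would solve the lean adjoint ODE \eqref{eq:lean-adjoint} in closed form: with $f\equiv 0$ and constant Jacobian $\nabla b_t = -\beta_{1-t}I$, it reduces to the scalar-linear ODE $\tfrac{d}{dt}a(t;X) = \beta_{1-t}\,a(t;X)$ with $a(1;X) = \nabla g(X_1)$, whose backward solution is $a(t;X) = \kappa_{1-t}\nabla g(X_1)$. Using $\nabla g = \nabla\log(\mathcal{N}/\gamma) = -\nabla\log g_0$, this is $a(t;X) = -\kappa_{1-t}\nabla\log g_0(X_1)$. Plugging into the adjoint matching objective \eqref{eq:adjoint-matching} and reparametrizing the control as $u_t = \sigma_t v(\cdot,t)$ (so that $v\approx\nabla\log g_{1-t}$ as in \eqref{eq:pdds-optu}), the integrand becomes $\sigma_t^2\,\lVert v(X_t,t) - \kappa_{1-t}\nabla\log g_0(X_1)\rVert^2$, matching the regression target of the NSM loss \eqref{eq:pdds-nsm}.

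It remains to check that the expectations are taken over the same law. Evaluated at the optimal detached control $\bar u = u^\star$, the controlled path measure is the $g_0(X_1)$-reweighting (Doob $h$-transform) of the reference OU, so $p^\star(X_t,X_1)\propto g_0(X_1)\,p^{\text{base}}(X_t,X_1)$; this makes $p^\star_1\propto g_0\,\mathcal{N} = \gamma\propto\pi$ and leaves the conditional $p^\star(X_t\mid X_1) = p^{\text{base}}(X_t\mid X_1)$ unchanged, which by reversibility of the OU equals the forward noising kernel at level $1-t$. Thus the $(X_t,X_1)$-pairs under $p^\star$ are exactly the (noised, clean) pairs of PDDS, identifying $X_1$ with the clean sample $X_0$ under the generative/diffusion time relabeling $t\leftrightarrow 1-t$. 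The two losses then agree up to the time weight $\sigma_t^2 = 2\beta_{1-t}$, which does not affect the pointwise-in-$t$ minimizer. The main obstacle I anticipate is exactly this bookkeeping: correctly tracking the generative-versus-diffusion time conventions and justifying the measure identification, where the exactness of $p^\star_1 = \pi$ (equivalently $V_0\equiv\text{const}$) relies on the mixing/memorylessness assumption already implicit in PDDS's reference process.
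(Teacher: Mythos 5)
Your proposal is correct and follows essentially the same route as the paper's proof: the value function is linearized and evaluated via Feynman--Kac along the reference OU process, whose transition kernel is identified with the PDDS posterior $p_0(\cdot\mid x_{1-t}=x)$ to give $e^{-V_t}=g_{1-t}$ and hence \eqref{eq:pdds-optu}, and the lean adjoint ODE is solved in closed form as $a(t;X)=-\kappa_{1-t}\nabla\log g_0(X_1)$ before reparametrizing $u=\sqrt{2\beta_{1-t}}\,v$ to recover \eqref{eq:pdds-nsm}. If anything you are more explicit than the paper on two points it glosses over---that the residual time weight $2\beta_{1-t}$ does not change the pointwise minimizer, and that the expectation laws match via the Doob $h$-transform / reciprocal structure of the optimally controlled path measure.
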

\begin{proof}
    % It suffices to prove for \eqref{eq:pdds-optu}
    The optimal control to \eqref{eq:pdds-soc} follows
    \begin{align*}
        u^\star_t(x) 
        &= - \sqrt{2 \beta_{1-t}} \nabla_x V_t(x) \\
        &= - \sqrt{2 \beta_{1-t}} \nabla_x \pr{- \log \mathbb{E}_{X_1 \sim p_1(\cdot|X_t=x)}\br{ \exp\pr{- \log \tfrac{\mathcal{N}}{\gamma}(X_1)}}} \\
        &= \sqrt{2 \beta_{1-t}} \nabla_x \pr{\log \int \frac{\gamma}{\mathcal{N}}(x_1) p_1(x_1 | x_t=x) dx_1 } \\
        &= \sqrt{2 \beta_{1-t}} \nabla \log g_{1-t}(x)
    \end{align*}
    where the last equality follows by the fact that $p_t(x| x_s) = \mathcal{N}(x; \sqrt{1-\lambda_{|t-s|}} x_s, \lambda_{|t-s|} I)$, due to $p_t = \mathcal{N}(0,I)$ for all $t \in [0,1]$, and hence 
    \begin{align*}
        \int \frac{\gamma}{\mathcal{N}}(y) p_1(y | x_t=x) dy
        = 
        % g_{1-t}(y) = 
        \int g_0(y) p_{1-2t}(y | x_{1-t} = x) dy = g_{1-t}(x).
        % = \int g_0(x) p_1(x | x_{1-t} = y) dx
    \end{align*}

    Next, notice that the lean adjoint ODE in this case reads
    \begin{align}\label{ep:pdds-lean-adjoint}
        \frac{d\tilde a_t}{dt} = \beta_{1-t} \tilde a_t, \qquad \tilde a_1 = \nabla \log \tfrac{\mathcal{N}}{\gamma}(X_1).
    \end{align}
    The solution to \eqref{ep:pdds-lean-adjoint} admits the form
    $\tilde a_t = C \exp\pr{\int_0^t \beta_{1-s} ds}$, where $C$ is such that 
    $\tilde a_1 = C \exp\pr{\int_0^1 \beta_{1-s} ds} = \nabla \log \tfrac{\mathcal{N}}{\gamma}$.
    Hence, \eqref{ep:pdds-lean-adjoint} can be solved in closed form
    \begin{align*}
        \tilde a_t 
        &= \nabla \log \tfrac{\mathcal{N}}{\gamma} \cdot \exp\pr{\int_{0}^t \beta_{1-s} ds - \int_{0}^1 \beta_{1-s} ds}  \\
        &= \nabla \log \tfrac{\mathcal{N}}{\gamma} \cdot \exp\pr{-\int_{t}^1 \beta_{1-s} ds} \\
        &= \nabla \log \tfrac{\mathcal{N}}{\gamma} \cdot \exp\pr{\int_{1-t}^0 \beta_\tau d\tau} \\
        &= -\nabla \log g_0 \cdot \kappa_{1-t}, 
    \end{align*}
    where the third equality is due to change of variable $\tau := 1-s$.
    Applying lean adjoint matching,
    \begin{align}
          \mathbb{E}\norm{u(x,t; \theta) + \sqrt{2 \beta_{1-t}} \tilde a_t}^2 
        = \mathbb{E}\norm{u(x,t; \theta) - \sqrt{2 \beta_{1-t}} \cdot \kappa_{1-t} \cdot \nabla \log g_0}^2
    \end{align}
    with a specific parametrization $u(x,t; \theta) = \sqrt{2 \beta_{1-t}} \cdot v(x,t; \theta)$ yields the NSM loss \eqref{eq:pdds-nsm}.
\end{proof}

\paragraph{Connection to NAAS}
The stochastic optimal control (SOC) problem underlying PDDS~\citep{phillips2024particle}—formulated in \cref{eq:pdds-soc}—shares the same structure as the SOC problem defined in DDS~\citep{vargas2023denoising}, i.e. \cref{eq:soc-dds}. As discussed in Section~\ref{sec:NAAS}, the DDS objective is itself a special case of the more general SOC formulation introduced by NAAS~\eqref{eq:soc-naas}. Therefore, the SOC perspective of PDDS naturally inherits this structure and can also be regarded as a special case within the broader NAAS framework.
Despite the close connection in problem formulation, the methods used to solve the SOC problem are different.
NAAS and DDS solve the SOC problem using adjoint-based learning objectives. In contrast, PDDS employs a non-adjoint-based objective, NSM loss \eqref{eq:pdds-nsm}. Instead, PDDS use the IWS strategy.

\subsection{Solving NAAS via Importance Sampling}
In this section, we explore an alternative algorithmic approach for optimizing the SOC problem defined in \cref{eq:soc-unbias-naas}. While we do not empirically evaluate this method in the current work, we present it as a promising direction for future research. Our SOC formulation \cref{eq:soc-unbias-naas} can be addressed in various method, resulting flexible algorithm design. Here, we provide another algorithm that handles our SOC problem based on important weighted sampling (IWS). Then, we introduce the close connection to PDDS \citep{phillips2024particle}.

% \subsection{IWS-based Algorithm for NAAS}
\paragraph{Basic formula} Consider the SOC problem in \cref{eq:soc}. In the seminar papers \citep{dai1991stochastic, pavon1991free}, the Radon-Nikodym (RN) derivative between the optimal and reference path measures can be written as follows:
\begin{align} \label{eq:rnd-path-measure}
    \frac{\rd p^\star}{\rd p^{\text{base}}} (X) = \exp\left( - \int^1_0 f_t (X_t) \rd t -g(X_1) + V_0(X_0) \right), 
\end{align}
Now, let $p^u$ be the path measure induced by the control $u$. By the Girsanov theorem (see \citet{nusken2021solving}), we can derive
\begin{align}\label{eq:rnd}
    \begin{split}
    &\frac{\rd p^\star}{\rd p^{u}} (X) = \frac{\rd p^{\text{base}}}{\rd p^{u}} (X) \frac{\rd p^\star}{\rd p^{\text{base}}} (X)  \\
    &\propto \exp\left(  \int^1_0 - \frac{1}{2} \Vert u_t (X_t) \Vert^2 dt - u_t (X_t) \cdot dW_t  \right) \cdot \exp\left( - \int^1_0 f_t (X_t) dt - g(X_1)  \right) / \varphi_0(X_0).    
    \end{split}
\end{align}
where $$\varphi_0(X_0) = e^{-V_0(X_0)} = \mathbb{E}_{\mathbf{X} \sim p^{\text{base}}\left(\mathbf{X} | X_0\right)} \br{\exp\left( - \int^1_0 f_t (X_t) \rd t -g(X_1)\right)},$$ by Feynman-Kac formula \citep{le2016brownian}. 

\paragraph{Importance Sampling for NAAS}
Now, consider our SOC problem \eqref{eq:soc-unbias-naas}. Because $X_{-1} = 0$,
\cref{eq:rnd} can be rewritten as follows:
\begin{multline}\label{eq:rnd-naas}
    \frac{\rd p^\star}{\rd p^u} (X) \propto 
    \exp \left(  \int^0_{-1} - \frac{1}{2} \Vert v_t (X_t)\Vert^2 dt- v_t (X_t) \cdot d W_t \right)
    \\
    \cdot \exp \left(  \int^1_0 - \frac{1}{2} \Vert u_t (X_t)\Vert^2 dt- u_t (X_t) \cdot d W_t - \int^1_0 \partial_t U (X_t, t) dt  \right).
\end{multline}
Note that we can discard $\varphi_{-1}(X_{-1})$ since $X_{-1} = 0$.
Given this expression, we can approximate the optimal path distribution $p^\star$ by sampling trajectories from $p^u$ and reweighting them using the Radon–Nikodym derivative \eqref{eq:rnd-naas}. Specifically, the following importance sampling procedure allows us to recover samples from $p^\star$:
\begin{enumerate}
    \item For given $N$, obtain a collection of trajectories $\{X^{(i)}\}^N_{i=1}$ where $X^{(i)} \sim p^u$.
    \item Compute the importance weights $\{w^{(i)}\}^N_{i=1}$ where $w^{(i)} := \frac{\rd p^\star }{\rd p^u} (X^{(i)})$ by \cref{eq:rnd-naas}.
    \item Resample $\{X^{(i)}\}^N_{i=1}$ by following categorical distribution:
    \begin{equation}\label{eq:iws}
        \hat{X} \sim \text{Cat}\left( \{\hat{w}^{(i)}\}^N_{i=1} , \{X^{(i)}\}^N_{i=1} \right),
    \end{equation}
    where $\hat{w}^{(i)} = \frac{w^{(i)}}{\sum^N_{i=1}w^{(i)}}$.
\end{enumerate}

\paragraph{Importance Sampled Bridge Matching for NAAS}
In this paragraph, we propose an IWS-based method for learning the optimal control $v^\star$ over the interval $t \in [-1, 0]$. Since $v^\star$ governs a transition from the Dirac distribution $\delta_0$ to the intermediate distribution $\mu$, this subproblem can be framed as a Schrödinger bridge problem (SBP) between $\delta_0$ and $\mu$. Specifically, $v^\star$ is the solution to the following stochastic control problem:
\begin{equation}
\min_v \ \mathbb{E} \left[ \int_{-1}^0 \frac{1}{2} \Vert v_t(X_t) \Vert^2 dt \right] \quad \text{s.t.} \quad dX_t = \bar\sigma v_t(X_t) dt + \bar\sigma dW_t, \quad X_{-1} \sim \delta_0, \ X_0 \sim \mu.
\end{equation}
Recent works \citep{shi2023diffusion, liugeneralized} introduce the \textit{bridge matching} (BM) loss as a scalable surrogate objective for solving SBPs. Under this framework, the optimal control $v^\star$ is obtained by minimizing following regression problem:
\begin{equation}\label{eq:bm}
    v^\star = \arg\inf_v  \mathbb{E}_{X_0 \sim \mu } \br{\Vert v_t (X_t) - \bar\sigma \nabla \log p^{\text{base}}_{t|-1,0}(X_t|X_{-1},X_0) \Vert^2}.
\end{equation}
The only missing component is the ability to sample from the intractable intermediate distribution $\mu$. Given a current estimate of the control $v^\theta$, we can (asymptotically) sample from $\mu$ using importance-weighted sampling (IWS) as described in \cref{eq:iws}. By combining IWS with the BM loss, we arrive at an iterative IWS-based algorithm for learning $v^\star$, as detailed in Algorithm~\ref{alg:iws}.

\begin{algorithm}[t]
\caption{IWS-NAAS}
\label{alg:iws}
    \begin{algorithmic}[1]
        \REQUIRE Tractable energy $U_0$, twice-differentiable target energy $U_1 (x)$, two parametrized networks $u^\theta:[0,1]\times\mathcal{X}\rightarrow \mathcal{X}$ and $v^\theta:  [-1,0]\times\mathcal{X}\rightarrow \mathcal{X}$ for control parametrization, buffers $\mathcal{B}_u$ and $\mathcal{B}_v$. Number of epochs $N_u$ and $N_v$.
        \FOR{stage $k$ \textbf{in} $1, 2 , \dots$}
            \FOR{epoch \textbf{in} $1, 2 , \dots N_u$}
            \STATE Sample $\{X_t\}_{t\in[-1,1]}$ through \cref{eq:dyn-prior} and \cref{eq:dyn-naas}. \COMMENT{forward pass}
            \STATE Compute lean adjoint $\{a_t\}_{t\in[0,1]}$ through \cref{eq:adjoint-naas2}. \COMMENT{backward pass}
            \STATE Push $\{(t, a_t, X_t)\}_{t\in[0,1]}$ into a buffer $\mathcal{B}_u$. \COMMENT{add to buffer}
            \STATE Optimize $u^\theta$ by \cref{eq:am-naas2} with samples from $\mathcal{B}_u$. \COMMENT{adjoint matching}
            \ENDFOR
            \FOR{epoch \textbf{in} $1, 2 , \dots N_v$}
            \STATE Sample trajectories $\{X^{(i)}\}^N_{i=1}$ through \cref{eq:dyn-prior} and \cref{eq:dyn-naas}. \COMMENT{forward pass}
            \STATE Obtain resampled samples $\{ \hat{X}^{(i)}_0 \}^N_{i=1}$ through \cref{eq:iws}. \COMMENT{IWS}
            \STATE Push $\{ \hat{X}^{(i)}_0 \}^N_{i=1}$ into a buffer $\mathcal{B}_v$. \COMMENT{add to buffer}
            \STATE Optimize $v^\theta$ by \cref{eq:bm} with samples from $\mathcal{B}_v$. \COMMENT{bridge matching}
            \ENDFOR
        \ENDFOR
    \end{algorithmic}
\end{algorithm}

\begin{figure}[h!]
    \begin{minipage}{1\textwidth}
        \centering
        \includegraphics[width=0.9\linewidth]{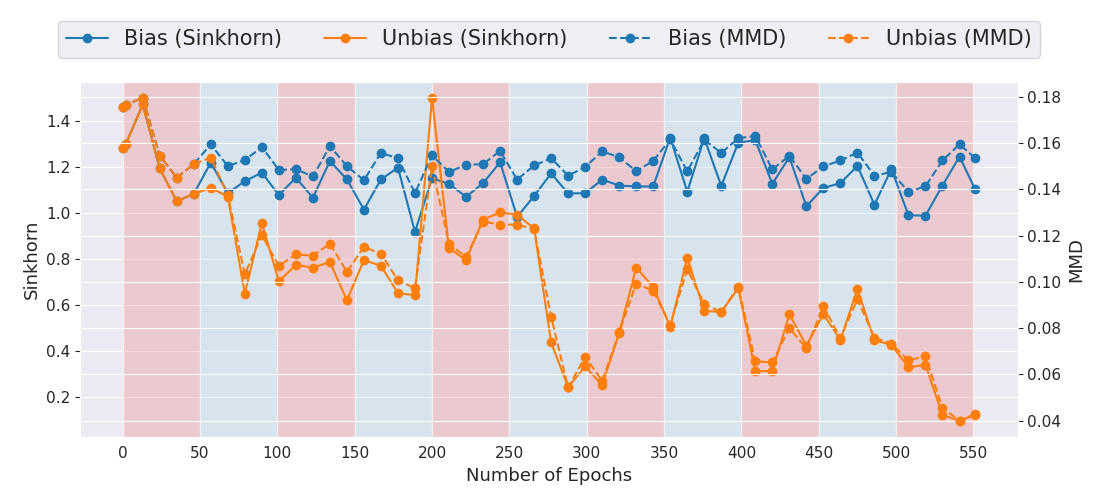}
        \vspace{-0.1in}
        \caption{Training dynamics for the GMM-grid energy function over training epochs. We compare NAAS trained without prior correction (Biased, \textit{blue}) and original NAAS (Unbias, \textit{orange}). In the original NAAS setup, we update the control $v^\theta$ and $u^\theta$ in an alternating fashion every 50 epochs. The shaded regions in \textit{lightblue} and \textit{lightcoral} indicate periods during which $v^\theta$ and $u^\theta$ are updated, respectively. Our results show that the original NAAS consistently achieves lower Sinkhorn and MMD scores (both $\downarrow$), indicating that the prior correction term plays a key role in debiasing the sampler and improving sample quality.}
        \label{fig:bias-training-dyn}
    \end{minipage}
    \begin{minipage}{1\textwidth}
        \centering
        \includegraphics[width=0.9\linewidth]{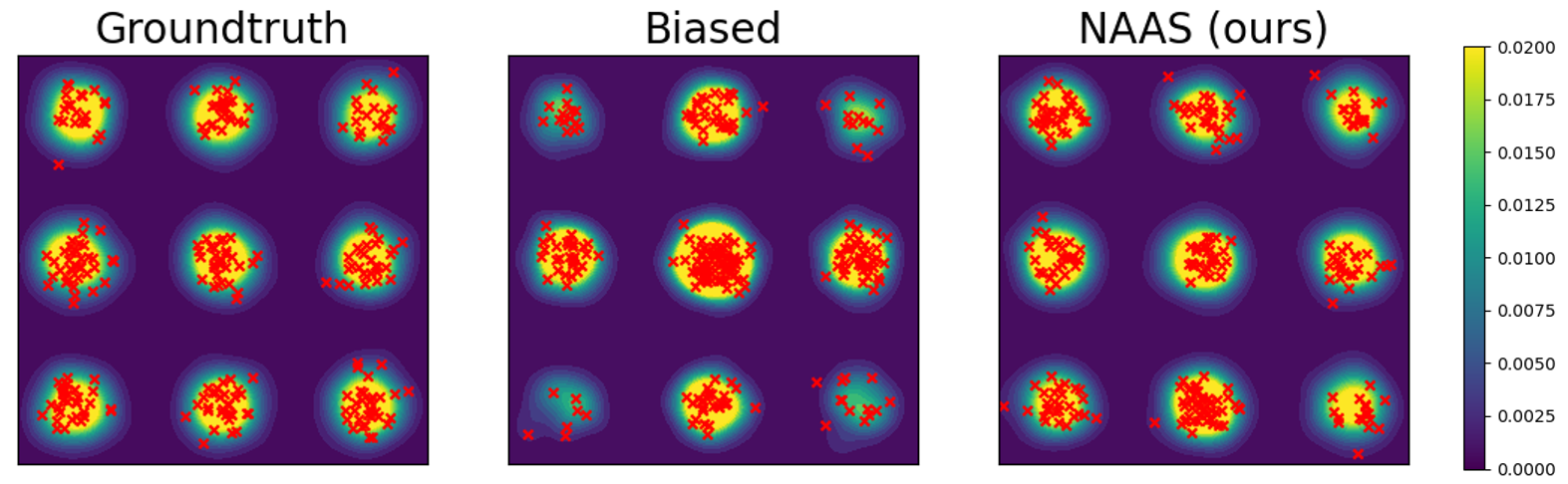}
        \vspace{-0.1in}
        \caption{Side-by-side comparison of ground truth (\textit{left}), NAAS w/o prior correction (Biased, \textit{middle}), and NAAS (\textit{right}). We visualize both the generated samples (red dots) and the corresponding kernel density estimates (KDE). As shown, the KDE plot for NAAS (Ours) exhibits more uniform coverage across all modes compared to the NAAS w/o correction, demonstrating that learning the prior distribution correctly debiases the sampler.}
        \label{fig:bias-qual}
    \end{minipage}
    \begin{minipage}{1\textwidth}
        \centering
        \includegraphics[width=0.9\linewidth]{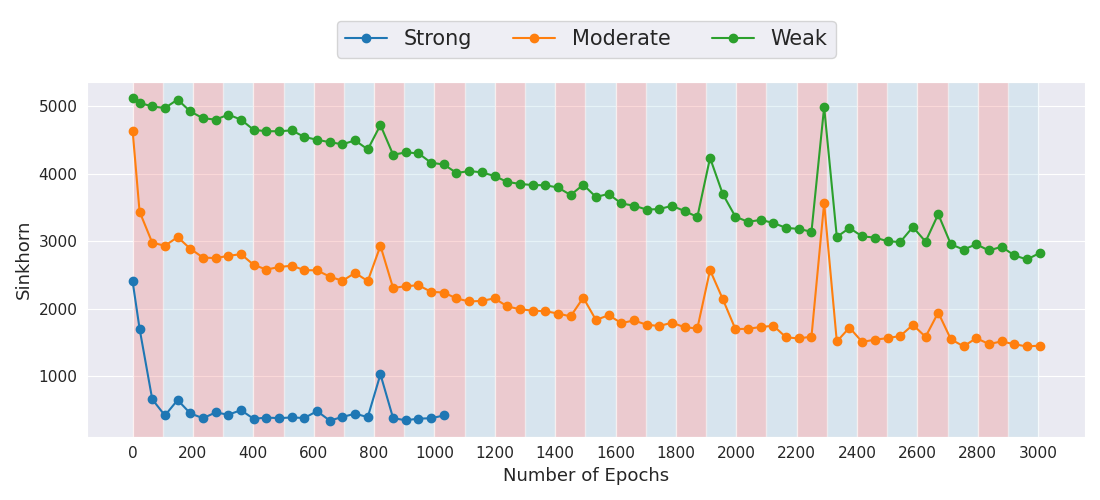}
        \vspace{-0.1in}
        \caption{Visualization of Sinkhorn metric over training epochs for varying levels of reference guidance. We compare three variants of the annealed reference dynamics, each using a different guidance strength. Specifically, we set $\sigma_{\text{max}} = 1000$, $100$, and $10$ to represent \textit{strong}, \textit{moderate}, and \textit{weak} guidance, respectively. As shown, stronger guidance leads to better initial performance, faster convergence, and superior final performance.}
        \label{fig:abl-reference}
    \end{minipage}
\end{figure}

% \clearpage
\section{Additional Results}\label{appen:addition-results}

\paragraph{Effect of Learning the Prior Distribution $\mu$}
In this section, we demonstrate the effect of learning the prior distribution $\mu$ in \cref{eq:am-naas1}. To isolate the impact of the reciprocal update, we compare against a baseline where $v^\theta\equiv 0$ is held fixed. In other words, we remove the update of $v^\theta$ described in lines 8–14 of Algorithm \ref{alg:naas}. Theoretically, this corresponds to optimizing the SOC problem in \cref{eq:soc-naas} under the fixed prior $\mu \propto e^{-U_0}$, which we refer to as a \textit{biased} estimator. 
We conduct experiments on a simple two-dimensional Gaussian mixture model (GMM) target distribution, denoted \textit{GMM-grid}, which the target distribution $\nu$ is defined as follows:
\begin{equation}
    \nu(x) = \frac{1}{9} \sum^3_{i=1} \sum^3_{j=1} \mathcal{N}(5 \left(i-2, j-2 \right), 0.3I).
\end{equation}
As shown in Figure~\ref{fig:bias-training-dyn}, both Sinkhorn and MMD measurement of the biased estimator quickly plateaus over training epochs, while the full NAAS algorithm continues to improve, achieving a sixfold improvement in performance. Moreover, as illustrated in Figure~\ref{fig:bias-qual}, NAAS provides significantly more uniform mode coverage compared to the biased variant, consistent with our theoretical result.

\paragraph{Effect of the Annealed Reference Dynamics}
We investigate the role of annealed reference dynamics by varying the strength of the annealed reference dynamics. In the annealed SDE defined in \cref{eq:escort_sde}, stronger guidance can be achieved by increasing the diffusion scale ${\sigma_t}{t \in [0,1]}$. Following the \textit{geometric noise schedule} \citep{song2021score, karras2022elucidating}, we define $\sigma_t$ as
\begin{equation}\label{eq:noise-schedule}
\sigma_t = \sigma_{\text{min}}^t \sigma_{\text{max}}^{1-t} \sqrt{2 \log \frac{\sigma_{\text{max}}}{\sigma_{\text{min}}}},
\end{equation}
where $\sigma_{\text{min}}$ and $\sigma_{\text{max}}$ are hyperparameters selected by design. Increasing $\sigma_{\text{max}}$ effectively imposes a stronger guiding signal throughout the training. To assess its impact, we test three values of $\sigma_{\text{max}}$—1000, 100, and 10—on the MoS (50d) distribution.  As shown in Figure~\ref{fig:abl-reference}, we refer to these settings as \textit{strong}, \textit{moderate}, and \textit{weak} guidance, respectively. As shown in the figure, stronger guidance improves training dynamics: larger $\sigma_{\text{max}}$ results in lower initial Sinkhorn distance, faster convergence, and superior final performance. These results highlight the critical role of the annealed reference in steering the sampler during optimization.

\paragraph{Results on High-dimensional Sampling Problems}
We evaluate the scalability of NAAS on a high-dimensional 100D GMM40 task, comparing it with two strong baselines: AS \citep{AS}, which employs standard reference dynamics without annealing, and SCLD \citep{chen2024sequential}, a state-of-the-art importance sampling method. We also report wall-clock time for the 50D setting, measured on a single NVIDIA A6000 GPU. As shown in Table~\ref{tab:high-gmm40}, NAAS achieves superior sample quality and convergence stability, while maintaining competitive runtime. Its efficiency partly stems from the use of replay buffers, which stabilize optimization and enable sample reuse. Although solving the backward adjoint state \cref{eq:adjoint-naas2} introduces additional computational overhead, the resulting improvements in stability and sample quality justify this minor cost, highlighting the practicality and scalability of our approach.

\begin{table}[h]
\caption{Results on the GMM40 benchmark across various dimensions.}
\vspace{5pt}
\centering
\scalebox{0.9}{
\begin{tabular}{cccccc}
     \toprule
    Method & $d=4$ & $d=16$ & $d=50$ & $d=100$ & Running time \\
     \midrule
     SCLD \citep{chen2024sequential} & 157.90 & 1033.19 & 3787.73 & 12357.49 & 30-40 mins  \\
     AS \citep{AS} & - & -& 18984.21 & - & 30 mins \\
     NAAS (Ours) & \textbf{25.15} & \textbf{37.96} & \textbf{496.48} & \textbf{633.20} & 120 mins \\
     \bottomrule
\end{tabular}
}
\label{tab:high-gmm40}
\end{table}

\paragraph{Additional Experiments on Variance Measurement}
Formally, IWS methods approximate the target distribution by $\nu \approx \sum^N_{i=1} w^{(i)}\delta_{X^{(i)}}$, where the unnormalized weights are obtained by $w^{(i)}\propto \frac{\rd p^\star}{\rd p} (X^{(i)})$ with the ratio between the target path measure $p^\star$ and proposal $p$. When the support or mass of  the proposal and target distributions are misaligned—as is common in high-dimensional or complex energy landscapes—these weights can suffer from high variance, as noted in prior works \citep{del2006sequential, stoltz2010free}. This leads to poor effective sample size and significant variance in the empirical approximation, and can be further exacerbated when the number of samples $N$ is insufficient.

To see it in empirically, we report the variance of the importance weights across various benchmarks. We choose LV-PIS as the baseline for comparison as the method directly optimizes the variance of importance weights and, similar to our NAAS, starts the generation from Dirac delta. For NAAS, we compute its importance weights using \cref{eq:rnd-naas}. As shown in Table~\ref{tab:var}, the variance of the importance weights for NAAS is significantly smaller than that of LV-PIS. Remarkably, NAAS achieves this reduction in variance despite never explicitly optimizing the variance of importance weights.

\begin{table}[h]
\caption{Comparison on the Variance of Importance Weights ($\downarrow$)}
\vspace{5pt}
\centering
\scalebox{0.9}{
\begin{tabular}{cccccc}
     \toprule
     Method & MW54 & Funnel \\
     \midrule
     LV-PIS \citep{richterimproved} & 21.11 & 5.46 \\
     NAAS (Ours) & \textbf{1.84} & \textbf{0.11} \\
     \bottomrule
\end{tabular}
}
\label{tab:var}
\end{table}

\paragraph{Effect of Annealed Reference Dynamics}
We conduct additional experiments to verify whether the improved performance of NAAS over AS arises from the use of annealing reference. 
Specifically, we instantiate NAAS with a particular choice of $U_t$, which remains equal to $U_0$ for all $t\in [0,1)$ and jump to $U_t=U_1$ at terminal $t=1$. This setup effectively leads to a two-staged AS baseline that retains the same first process $[-1, 0] $ as NAAS, while completely removing the annealing reference in the second stage. As we proved in Corollary~\ref{cor:dds}, the optimal process of this two-staged AS baseline satisfies the desired target.
As shown in Table~\ref{tab:asas}, the two-stage AS (first row) performs similarly to the original AS. In contrast, our NAAS (second row) outperforms this baseline by a significant margin. These results highlight the important role of the annealed reference in NAAS, which contributes non-trivially to its improved performance.

\begin{table}[h]
\caption{Comparison between two-staged AS \citep{AS} and NAAS. We report Sinkhorn distances ($\downarrow$).}
\vspace{5pt}
\centering
\scalebox{0.9}{
\begin{tabular}{cccc}
     \toprule
     Method & Interpolation & MW54 & GMM40 \\
     \midrule
     two-staged AS & $U_{t\in[0,1)} = U_0$  & 0.43 & 19776.91 \\
     NAAS  & $U_t = (1-t)U_0 + t U_1$ & \textbf{0.10} & \textbf{496.48} \\
     \bottomrule
\end{tabular}
}
\label{tab:asas}
\end{table}

\paragraph{Effect of $\sigma_t$}
The diffusion term $\sigma_t$ on time $t\in [0,1]$ is a key hyperparameter of our method. This controls the guidance scale of annealed reference dynamics, so if $\sigma_t$ is large, then gives more guidance, small then small guidance. As shown in Table~\ref{tab:sigma}, we ablate the performance of NAAS on GMM40 benchmark by gradually increasing the $\sigma_{\text{max}}$ that defines $\sigma_t := \sigma_{\text{min}}^t \sigma_{\text{max}}^{1-t} \sqrt{2\log \frac{\sigma_\text{max}}{\sigma_{\text{min}}}}$.

As shown in Table~\ref{tab:sigma}, it is evident that the performance of NAAS depends on $\sigma_{\text{max}}$, as the hyperparameter directly affects its exploration capability. However, once $\sigma_{\text{max}}$ surpasses some threshold (~20 for this specific benchmark), the performance of NAAS stabilizes and becomes insensitive to further increase of $\sigma_{\text{max}}$. That said, tuning $\sigma_t$ in practice could be straightforwardly performed with e.g., grid or binary search, without having access to samples from or near target distribution. We emphasize these results as strong evidence of the robustness of NAAS for practical usages.

\begin{table}[h]
\caption{Ablation studies on $\sigma_{\text{max}}$. We report Sinkhorn distances ($\downarrow$).}
\vspace{5pt}
\centering
\scalebox{0.9}{
\begin{tabular}{ccccccc}
     \toprule
     Method & $\sigma_{\text{max}}=1$ & $\sigma_{\text{max}}=10$ & $\sigma_{\text{max}}=20$ & $\sigma_{\text{max}}=50$ & $\sigma_{\text{max}}=100$ & 
     $\sigma_{\text{max}}=200$ \\
     \midrule
     NAAS & 22893.72 & 1350.76 & 639.86 & 496.48 & 443.55 & 492.83 \\
     \bottomrule
\end{tabular}
}
\label{tab:sigma}
\end{table}

\paragraph{Visualizing Samples for Various $\sigma_\text{max}$}
We visualize the effect of varying the guidance strength, parameterized by $\sigma_t$. As shown in \cref{eq:prior}, increasing $\sigma_t$ drives the prior distribution $\mu$ at $t = 0$ closer to the reference distribution, i.e., $\propto e^{-U_0}$. As illustrated in Figure~\ref{fig:vis_sigma}, larger values of $\sigma_t$ make $\mu$ increasingly resemble a Gaussian distribution $\mathcal{N}(0, 3^2 I) \propto e^{-U_0}$, which is consistent with our theoretical expectation.

\begin{figure*}[t]  
    \centering
    \begin{subfigure}[b]{1\textwidth}
        \includegraphics[width=1\textwidth]{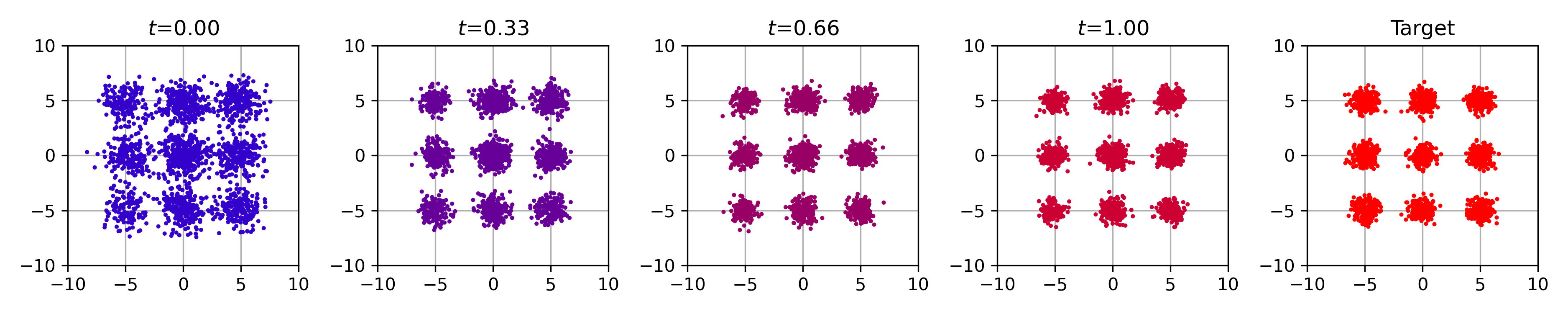}
        \vspace{-6mm}
        \caption{$\sigma_\text{max}=3$}
    \end{subfigure}
    \begin{subfigure}[b]{1\textwidth}
        \includegraphics[width=1\textwidth]{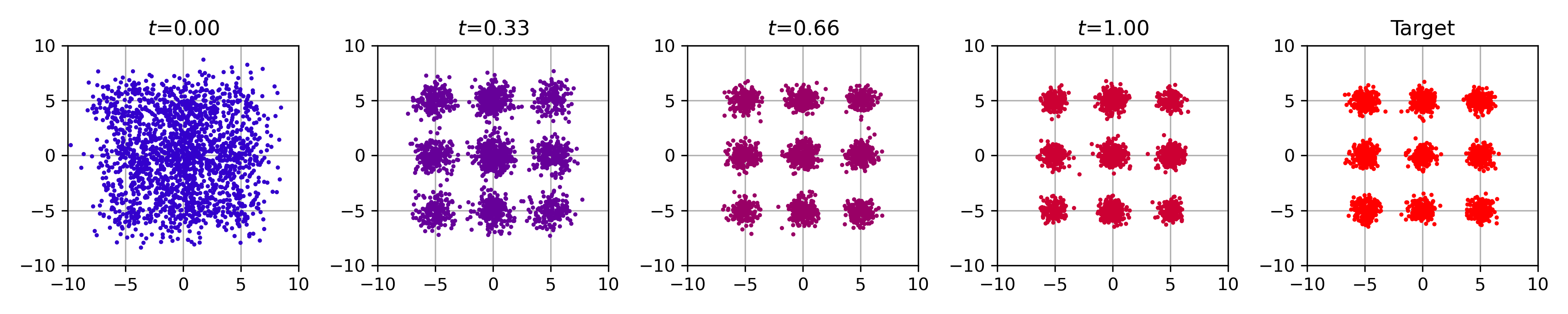}
        \vspace{-6mm}
        \caption{$\sigma_\text{max}=20$}
    \end{subfigure}
    \begin{subfigure}[b]{1\textwidth}
        \includegraphics[width=1\textwidth]{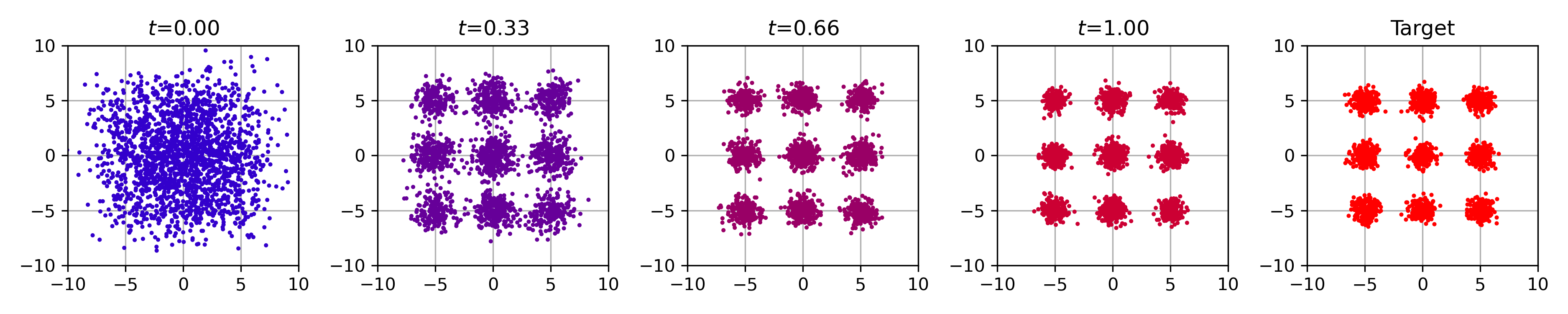}
        \vspace{-6mm}
        \caption{$\sigma_\text{max}=50$}
    \end{subfigure}
    % \vspace{-1mm}
    \caption{Ablation on $\sigma_{\text{max}}$. As $\sigma_t$ increases, the prior distribution $\mu$ at $t = 0$ becomes increasingly similar to the stationary distribution $\propto e^{-U_0}$.}
    \label{fig:vis_sigma}
    \vspace{-4mm}
\end{figure*}

\paragraph{Mode Coverage}
We further conduct experiments to assess whether the proposed sampler can accurately recover mode weights in multimodal distributions with non-uniform component weights. Following the setup introduced in LRDS \citep{noble2024learned}, we consider a bimodal Gaussian mixture with components located at $\mathcal{N}(-a\mathbf{1}_d, \Sigma_1)$ and $\mathcal{N}(a\mathbf{1}_d, \Sigma_1)$, associated with mixture weights $w_1 = \tfrac{2}{3}$ and $w_2 = \tfrac{1}{3}$, respectively.

As reported in Table~\ref{tab:lrds}, our method achieves competitive performance and accurately recovers the true mode proportions, indicating that NAAS does not exhibit mode blindness or mode switching in this setting. Interestingly, we observe that the controlled dynamics in the first stage ($t \in [-1,0]$) play a crucial role in reweighting the modes. At the beginning of training, the estimated weights are approximately $\hat{w}_1 \approx 0.5$, but as the control for $t \in [-1,0]$ is learned, the sampler gradually reallocates mass toward $\hat{w}_1 \approx \tfrac{2}{3}$, aligning with the true distribution.

Table~\ref{tab:lrds-dist} summarizes our results across varying between-mode distances $a$. NAAS consistently recovers the correct mode weights and performs comparably to LRDS, even as the separation between modes increases. For LRDS, we retained the default hyperparameters but trained longer for $a = 1, 2, 3$ to ensure convergence; we note that its performance could likely be further improved with additional tuning. We emphasize, however, that these results are not intended as an extensive comparison between NAAS and LRDS, but rather as an ablation study demonstrating the robustness of NAAS in challenging multimodal scenarios with widely separated modes.

We also evaluate NAAS and LRDS on simpler benchmarks, MW54 and Funnel. As shown in Table~\ref{tab:lrds-benchmark}, NAAS achieves slightly better overall performance than LRDS under these settings.

\begin{figure}[h]
    \centering
    \begin{minipage}{0.29\linewidth}
        \centering
        \captionof{table}{
        Comparison on mode coverage. We report the absolute deviation in mode weight, $|\hat{w}_1 - w_1|$ ($\downarrow$).
        } \label{tab:lrds}
        \vspace{-2mm}
        \scalebox{0.8}{
        \begin{tabular}{ccc}
        \toprule
        Method & $d=16$ & $d=32$ \\
        \midrule
        AS & 20.5 \% & - \\
        LRDS & 1.7 \%  & 2.7 \% \\
        NAAS & 3.3 \%  & 2.0 \%  \\
        \bottomrule
        \end{tabular}}
        \vspace{5pt}
    \end{minipage}
    \hfill
    \begin{minipage}{0.38 \linewidth}
        \centering
        \captionof{table}{
        Ablation study on mode distance $a$ with dimension $d = 16$. We report the absolute deviation in mode weight, $|\hat{w}_1 - w_1|$ ($\downarrow$).
        } \label{tab:lrds-dist}
        \vspace{-2mm}
        \scalebox{0.8}{
        \begin{tabular}{cccc}
        \toprule 
        Method & $a=1$ & $a=2$ & $a=3$  \\
        \midrule
        LRDS &  1.7 \%  & 6.5 \% &  6.7 \%  \\
        NAAS &  1.7 \%  & 3.2 \% &  3.6 \%  \\
        \bottomrule
        \end{tabular}}
        \vspace{5pt}
    \end{minipage}
    \hfill
    \begin{minipage}{0.26\linewidth}
        \centering
        \captionof{table}{
        Comparison on MW54 and Funnel. We report Sinkhorn distance ($\downarrow$).
        } \label{tab:lrds-benchmark}
        \vspace{-2mm}
        \scalebox{0.8}{
        \begin{tabular}{ccccc}
        \toprule
        Method & MW54 & Funnel  \\
        \midrule
        LRDS& 0.85 & 152.09 \\
        NAAS& 0.10 & 132.30 \\
        \bottomrule
        \end{tabular}}
    \end{minipage}
\end{figure}

\section{Implementation Details}\label{appen:implementation-details}
In this section, we provide the detailed setup for our synthetic energy experiments in Table 1. We evaluate our model on four synthetic energy functions, namely MW-54, Funnel, GMM40, MoS following SCLD \citep{chen2024sequential}. These examples are typical and widely used examples to examine the quality of the sampler. Throughout the discussion, let $d$ be a dimension of the state and let $$ x = (x_1, x_2 , \dots, x_d). $$

\paragraph{MW-54}
We use the same many-well energy function as in SCLD \citep{chen2024sequential}. We consider many-well energy function, where the corresponding unnormalized density is defined as follows:
\begin{equation}
	\nu(x) \propto \exp{ - \sum^m_{i=1} (x^2_{i} - \delta)^2 - \frac{1}{2} \sum^d_{m+1} x^2_i },
\end{equation}
where $d=5$, $m=5$, and $\delta=4$, leading to $2^m = 2^5 = 32$ wells in total.

\paragraph{Funnel}
We follow the implementation introduced in \cite{neal2003slice, chen2024sequential}, which is a funnel-shaped distribution with $d=10$. Precisely, it is defined as follows:
\begin{equation}
	\nu(x) \propto \mathcal{N}(x_1; 0, \sigma^2) \mathcal{N}((x_2, \dots, x_d); 0, \exp(x_1)I),
\end{equation}
with $\sigma^2  = 9$.

\paragraph{GMM40}
Let the target distribution $\nu$ is defined as follows:
\begin{equation}\label{eq:gmm}
	\nu(x) = \frac{1}{m} \sum^m_{i=1} \nu_i (x).
\end{equation}
Following \citep{blessing2024beyond, chen2024sequential}, each mode of  Gaussian Mixture Model (GMM) is defined as follows:
\begin{equation}
	\nu_i = \mathcal{N}(m_i, I), \quad m_i \sim U_d (-40, 40),
\end{equation}
where $U_d (a, b)$ is a uniform distribution on $[a, b]^d$. We use $d=50$.

\paragraph{MoS}
We also use the same Mixture of Student's t-distribution (MoS) following \cite{blessing2024beyond, chen2024sequential}. Following the notation of \cref{eq:gmm}, we define MoS of $m=10$ and $d=50$, where each mode $\nu_i$ is defined as follows:
\begin{equation}
	\nu_i = t_2 + m_i, \quad m_i \sim U_d (-10, 10),
\end{equation}
where $t_2$ is a Student's t-distribution with a degree of freedom of 2.

\paragraph{Alanine Dipeptide (AD)}
is a molecule that consists of 22 atoms in 3D, therefore, in total 66-dimension data. 
Following prior work \citep{zhang2022path, wu2020stochastic}, we aim to sample from its Boltzmann distribution at temperature 300K. Moreover, we convert energy function to a internal coordinate with the dimension $d=60$ through the OpenMM library \cite{eastman2017openmm}. 

\paragraph{Training Hyperparameters}
We adopt the notation of the hyperparameter of Algorithm \ref{alg:naas}. For additional hyperparameters, let $B$ be the batch size, $\text{lr}_u$ and $\text{lr}_v$ be the learning rate for $u^\theta$ and $v^\theta$, respectively. Moreover, note that we use Adam optimizer \citep{kingma2014adam} with $(\beta_1, \beta_2) = (0, 0.9)$ for all experiments. Let $| \mathcal{B}| := | \mathcal{B}_u |=| \mathcal{B}_v |$ be the buffer size for both buffers, $K$ be a total number of stages (see line 1), and let $M_u$ and $M_v$ be a total number of iterations to optimize lines 6 and 12, respectively. 
Let $N$ be the number of samples generated in lines 3 and 9, hence, a total of $N$ samples are eventually added to the buffer in line 6 and 12.
Moreover, for stability, we clip the maximum gradient of energy function $\nabla U_t$ with clip parameter of $E_{\text{max}}$, and also clip \cref{eq:vector-hessian} with hyperparameter of $A_{\text{max}}$. We use \textit{geometric noise scheduling} from \citet{song2021score, karras2022elucidating} to schedule the noise levels $\{\sigma_t\}_{t \in [0,1]}$, controlled by hyperparameters $\sigma_{\text{max}}$ and $\sigma_{\text{min}}$ (See \cref{eq:noise-schedule}).
The hyperparameters for each experiment are organized in Table \ref{tab:hyps}.

\begin{table}[t]
\caption{Hyperparameter Settings}
\vspace{5pt}
\centering
\scalebox{0.9}{
\begin{tabular}{cccccc}
     \toprule
    Hyperparameter & MW54 & Funnel & GMM40 & MoS & AD \\
     \midrule
     $K$                            & 3                         &  10  &  5  &  5  & 10 \\
     $(N_u, N_v)$                   & (100, 100)                & (100, 100)& (100, 100)& (100, 100)& (1000,200)\\
     $(M_u, M_v)$                   & (400, 400)                & (400, 400)&(400, 400)&(400, 400)&(2000,2000) \\
     $B$                            & 512                       & 512& 512& 512& 512             \\
     $|\mathcal{B}|$                & 10000                     & 10000& 10000& 10000& 10000 \\
     $N$                            & 2048                      &512&512&512 & 200  \\
     $E_{\text{max}}$               & 100                       & 1000   &  1000  &  1000  & 1000 \\
     $A_{\text{max}}$               & -                         & 100    & 100    &  100   & 10 \\
     $(\text{lr}_u, \text{lr}_v)$   & ($10^{-5}$, $10^{-8}$)    & ($10^{-4}$, $10^{-4}$) & ($10^{-6}$, $10^{-8}$) & ($10^{-4}$, $10^{-6}$) &  ($10^{-4}$, $10^{-8}$) \\
     $\bar\sigma$                   & 1.0                       & 1.0  &  50     & 15 & 1 \\
     $\sigma_{\text{max}}$          & 1.0                       & 9    &  50     & 1000  & 2 \\
     $\sigma_{\text{min}}$          & 0.01                      &  0.01  & 0.01  & 0.01  & 0.01 \\
     gradient clip                  & 1.0                       & 1.0& 1.0& 1.0& 1.0  \\
     \bottomrule
\end{tabular}
}
\label{tab:hyps}
\end{table}

\paragraph{Comparisons and Evaluation metric}
For both the MMD and Sinkhorn distance in Table \ref{tab:toy}, we follow the implementation of \citep{blessing2024beyond}. The benchmark results are taken from \cite{blessing2024beyond}, \cite{chen2024sequential} and \cite{akhound2024iterated}.
For metrics in Table \ref{tab:ad}, we evaluate KL divergence ($D_{\text{KL}}$) and Wasserstein distance between energy values ($E(\cdot)W_2$) of generated and reference samples. For $W_2$ estimation, we use Python OT (POT) library \citep{pot}.
For all metrics, we use 2000 generated and reference samples.

%%%%%%%%%%%%%%%%%%%%%%%%%%%%%%%%%%%%%%%%%%%%%%%%%%%%%%%%%%%%
% \clearpage
% \bibliographystyle{assets/plainnat} % text
% \bibliographystyle{unsrt} % numeric
% \bibliography{reference.bib}
% \clearpage

% \input{checklist}[:, None, :]

\end{document}